\pgfplotsset{compat=1.18}
\newtcbox{\transpar}{blank, on line, opacitytext=0.3}
\NewDocumentCommand{\brushfrac}{mm}{%
  \mbox{%
    \bsifamily
    \check@mathfonts
    \sbox\z@{/}%
    \raisebox{\dimeval{\ht\z@-\height}}{\fontsize{\sf@size}{0}\selectfont#1}%
    \kern-0.2em/\kern-0.2em
    \raisebox{-\dp\z@}{\fontsize{\sf@size}{0}\selectfont#2}%
  }%
}
\newcommand{\cmark}{\ding{51}}%
\newcommand{\xmark}{\transpar{\ding{55}}}%
\NewDocumentCommand{\cxmark}{}{\transpar{\brushfrac{1}{2}}}
\newcommand{\arealeft}{\protect\tikz[baseline=0.5pt]{\protect\draw[
thick,
red,
preaction={fill=red,opacity=.05},
draw opacity=1,
pattern={Lines[distance=1mm,angle=135,line width=0.1mm]},
pattern color=red!50!white]   rectangle(3ex, 1.7ex);} \!\,}
\newcommand{\arearight}{\protect\tikz[baseline=0.5pt]{\protect\draw[
thick,
red,
preaction={fill=red,opacity=.05},
draw opacity=1,
pattern={Lines[distance=1mm,angle=-135,line width=0.1mm]},
pattern color=red!50!white]   rectangle(3ex, 1.7ex);} \!\,}
\providecommand{\customgenericname}{}
\newcommand{\newcustomtheorem}[2]{%
  \newenvironment{#1}[1]
  {%
   \renewcommand\customgenericname{#2}%
   \renewcommand\theinnercustomgeneric{##1}%
   \innercustomgeneric
  }
  {\endinnercustomgeneric}
}
\def\lognormalpdf(#1, #2, #3){(1/(#1*#3*sqrt(2 * pi))) * exp(-((ln(#1) - #2)^2)/(2 * #3^2))}
\newcommand*\circled[1]{\tikz[baseline=(char.base)]{
            \node[shape=circle,draw,inner sep=2pt] (char) {#1};}}
\DeclareMathOperator*{\argmin}{arg\,min}
\theoremstyle{plain}
\newtheorem{theorem}{Theorem}[section]
\theoremstyle{definition}
\newtheorem{definition}[theorem]{Definition}
\theoremstyle{remark}
\newcommand{\IfRestatedTF}[2]{\ifthmt@thisistheone #2\else #1\fi}
\icmltitlerunning{Relaxed Quantile Regression: Prediction Intervals for Asymmetric Noise}
\begin{document}

\twocolumn[
\icmltitle{Relaxed Quantile Regression: Prediction Intervals for Asymmetric Noise}

% It is OKAY to include author information, even for blind
% submissions: the style file will automatically remove it for you
% unless you've provided the [accepted] option to the icml2024
% package.

% List of affiliations: The first argument should be a (short)
% identifier you will use later to specify author affiliations
% Academic affiliations should list Department, University, City, Region, Country
% Industry affiliations should list Company, City, Region, Country

% You can specify symbols, otherwise they are numbered in order.
% Ideally, you should not use this facility. Affiliations will be numbered
% in order of appearance and this is the preferred way.
\icmlsetsymbol{equal}{*}

\begin{icmlauthorlist}
\icmlauthor{Thomas Pouplin}{equal,yyy}
\icmlauthor{Alan Jeffares}{equal,yyy}
\icmlauthor{Nabeel Seedat}{yyy}
\icmlauthor{Mihaela van der Schaar}{yyy}
\end{icmlauthorlist}

\icmlaffiliation{yyy}{Department of Applied Mathematics and Theoretical Physics, University of Cambridge, UK}

\icmlcorrespondingauthor{Thomas Pouplin}{tp531@cam.ac.uk}
% \icmlcorrespondingauthor{Firstname2 Lastname2}{first2.last2@www.uk}

\icmlkeywords{Machine Learning, ICML, Quantile regression, interval regression, pinball loss, neural networks}

\vskip 0.3in
]

% this must go after the closing bracket ] following \twocolumn[ ...

% This command actually creates the footnote in the first column
% listing the affiliations and the copyright notice.
% The command takes one argument, which is text to display at the start of the footnote.
% The \icmlEqualContribution command is standard text for equal contribution.
% Remove it (just {}) if you do not need this facility.

%\printAffiliationsAndNotice{}  % leave blank if no need to mention equal contribution
\printAffiliationsAndNotice{\icmlEqualContribution} % otherwise use the standard text.

\begin{abstract}
{Constructing valid prediction intervals rather than point estimates is a well-established approach for uncertainty quantification in the regression setting. Models equipped with this capacity output an interval of values in which the ground truth target will fall with some prespecified probability. This is an essential requirement in many real-world applications where simple point predictions' inability to convey the magnitude and frequency of errors renders them insufficient for high-stakes decisions. Quantile regression is a leading approach for obtaining such intervals via the empirical estimation of quantiles in the (non-parametric) distribution of outputs. This method is simple, computationally inexpensive, interpretable, assumption-free, and effective. However, it does require that the specific quantiles being learned are chosen a priori. This results in (a) intervals that are arbitrarily symmetric around the median which is sub-optimal for realistic skewed distributions, or (b) learning an excessive number of intervals. In this work, we propose Relaxed Quantile Regression (RQR), a direct alternative to quantile regression based interval construction that removes this arbitrary constraint whilst maintaining its strengths. We demonstrate that this added flexibility results in intervals with an improvement in desirable qualities (e.g. mean width) whilst retaining the essential coverage guarantees of quantile regression.}
\end{abstract}

\section{Introduction}

Reliable uncertainty estimation is an essential requirement for safely and robustly deploying neural networks in real-world applications \citep{amodei2016concrete, dietterich2017steps, kompa2021second}. However, research has consistently shown this to be a challenging problem in practice \citep{guo2017calibration, yao2019quality, ayhan2022test}. Therefore, significant efforts have been made to address this task in order to contribute towards more reliable and trustworthy models (see e.g. \citet{gawlikowski2023survey}). A significant aspect of this effort is developing regression methods that output predictive intervals rather than point predictions. This has proven to be a crucial requirement in high-stakes applications including medical decision-making \citep{begoli2019need}, autonomous driving \citep{su2023uncertainty}, and energy forecasting \citep{wang2022deep}. 

Especially in the case of neural networks, \textit{quantile regression} \citep{koenker1978regression} has emerged as a powerful method for obtaining such intervals. This approach requires the model to output estimates of two quantiles rather than a single point prediction, which is easily optimized in practice by a simple change in loss function. These quantiles may then be used to construct an interval $(\mu_1, \mu_2)$ within which the true label will lie with probability $\alpha$ (a formal description of quantile regression is provided in \Cref{sec:background}). Obtaining predictive intervals via quantile regression has earned substantial popularity in both research and practice \citep{koenker2001quantile, koenker2017quantile, yu2003quantile, fitzenberger2001economic}. This uptake can be attributed to several factors, including (a) methodological simplicity requiring minimal changes to the modeling procedure, (b) negligible increased computational cost (in contrast to e.g. ensemble methods \citep{lakshminarayanan2017simple}), (c) a simple, easily interpreted characterization of uncertainty \citep{savelli2013advantages, goodwin2010forecasts}, (d) lack of parametric assumptions on the data-generating process, and (e) enduring empirical effectiveness \citep{chung2021beyond, tagasovska2019single}. Furthermore, quantile regression methods can be wrapped in the conformal prediction procedure of \citet{vovk2005algorithmic} to additionally provide finite sample coverage guarantees, as demonstrated in \citet{romano2019conformalized}.

This work primarily focuses on the standard task of outputting predictive intervals that obtain a \textit{prespecified} level of coverage (but the method could also be applied to the related non-prespecified task e.g. \citet{chung2021beyond}). This problem does not have a unique solution (consider e.g. trivial solutions in which some percentage of predictions are given infinite width intervals). Therefore, several variations of quantile regression have been introduced in recent years (see \mbox{\Cref{sec:background}}) which are typically also evaluated based on \emph{additional} desirable properties of their resulting prediction intervals. These include minimizing interval width and achieving improved conditional coverage (see e.g. \mbox{\citet{pearce2018high,feldman2021improving}} respectively, and further discussion in \mbox{\Cref{sec:background}}).

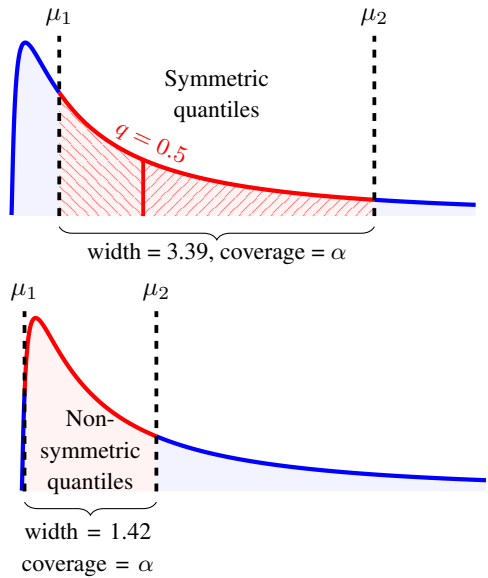
\begin{figure}
% \vspace{-0.5cm}
\centering
\begin{tikzpicture}

% variable names
\def\intervalwidth{3.39}   % the width of the resulting interval
\def\lowerquantile{0.5151459}   % the lower quantile
\def\upperquantile{3.9090919}   % the upper quantile
\def\lnmu{0.35}  % mu of lognormal
\def\lnsigma{1.5}   % sigmal of lognormal
\def\visxstart{-0.1}    % visible domain
\def\visxend{5.1}    % visible domain
\def\visystart{-0.2}    % visible range
\def\startx{-0.2}   % start point of x in plot
\def\endx{5}    % end point of x in plot
\def\textheight{0.4}
\def\medianx{1.41907}

\def\filllinedist{1mm} % distance between the lines filling the plot

\begin{axis}[
hide axis,
xmin=\visxstart,
xmax=\visxend,
ymin=\visystart,
samples=200,
height=5cm,
width=8cm,
clip=false
]

% LHS
\path[name path=baseLHS] (axis cs:0,0) -- (axis cs:\lowerquantile,0);
\addplot[name path=lognormalpathLHS, domain=\startx:\lowerquantile, blue, ultra thick] {\lognormalpdf(x, \lnmu, \lnsigma)};
\addplot [
        thick,
        color=blue,
        fill=blue, 
        fill opacity=0.05
    ]
fill between[
        of=lognormalpathLHS and baseLHS,
    ];

% MIDDLE
\path[name path=baseMID] (axis cs:\lowerquantile,0) -- (axis cs:\upperquantile,0);
\addplot[name path=lognormalpathMID, domain=\lowerquantile:\upperquantile, red, ultra thick] {\lognormalpdf(x, \lnmu, \lnsigma)};
\addplot [
        thick,
        color=blue,
        fill=red, 
        fill opacity=0.05
    ]
fill between[
        of=lognormalpathMID and baseMID,
    ];

% RHS
\path[name path=baseRHS] (axis cs:\upperquantile,0) -- (axis cs:\endx,0);
\addplot[name path=lognormalpathRHS, domain=\upperquantile:\endx, blue, ultra thick] {\lognormalpdf(x, \lnmu, \lnsigma)};
\addplot [
        thick,
        color=blue,
        fill=blue, 
        fill opacity=0.05
    ]
fill between[
        of=lognormalpathRHS and baseRHS,
    ];

% add hatch fill to highlight the median
\addplot[draw=none, pattern={Lines[
                  distance=\filllinedist,
                  angle=135,
                  line width=0.1mm]},pattern color=red!50!white]
fill between[
    of=lognormalpathMID and baseMID,
    soft clip={domain=\lowerquantile:\medianx},
];
\addplot[draw=none, pattern={Lines[
                  distance=\filllinedist,
                  angle=-135,
                  line width=0.1mm]},pattern color=red!50!white]
fill between[
    of=lognormalpathMID and baseMID,
    soft clip={domain=\medianx:\upperquantile},
];
\draw [color=red, line width=0.5mm] (\medianx,0) -- (\medianx,{\lognormalpdf(\medianx, \lnmu, \lnsigma)}) node [above, rotate=-25, yshift=-0.2ex] {\small $q = 0.5$};
% \lognormalpdf(\medianx, \lnmu, \lnsigma)

% add vertical lines
\draw [dashed, line width=0.5mm] (\lowerquantile,0) -- (\lowerquantile,0.6) node [above] {$\mu_1$};
\draw [dashed, line width=0.5mm] (\upperquantile,0) -- (\upperquantile,0.6) node [above] {$\mu_2$};
\draw [decorate,decoration={brace, mirror, amplitude=5pt},yshift=-4pt] (\lowerquantile,0) -- (\upperquantile, 0) node [black,midway,yshift=-10pt] {\small width = \intervalwidth, coverage = $\alpha$};
% \path[text width=2cm, align=center] (\lowerquantile, \textheight) -- (\upperquantile, \textheight) node[midway] {\small Symmetric quantiles};

\end{axis}
\end{tikzpicture}
%%%%%%%%%%% SECOND PICTURE STARTS HERE %%%%%%%%%%%
\qquad
\begin{tikzpicture}
% variable names
\def\intervalwidth{1.42}   % the width of the resulting interval
\def\lowerquantile{0.03}   % the lower quantile
\def\upperquantile{1.45}   % the upper quantile
\def\lnmu{0.35}  % mu of lognormal
\def\lnsigma{1.5}   % sigmal of lognormal
\def\visxstart{-0.1}    % visible domain
\def\visxend{5.1}    % visible domain
\def\visystart{-0.2}    % visible range
\def\startx{-0.2}   % start point of x in plot
\def\endx{5}    % end point of x in plot
\def\textheight{0.13}

\begin{axis}[
hide axis,
xmin=\visxstart,
xmax=\visxend,
ymin=\visystart,
samples=200,
height=5cm,
width=8cm,
clip=false
]

% LHS
\path[name path=baseLHS] (axis cs:0,0) -- (axis cs:\lowerquantile,0);
\addplot[name path=lognormalpathLHS, domain=\startx:\lowerquantile, blue, ultra thick] {\lognormalpdf(x, \lnmu, \lnsigma)};
\addplot [
        thick,
        color=blue,
        fill=blue, 
        fill opacity=0.05
    ]
fill between[
        of=lognormalpathLHS and baseLHS,
    ];

% MIDDLE
\path[name path=baseMID] (axis cs:\lowerquantile,0) -- (axis cs:\upperquantile,0);
\addplot[name path=lognormalpathMID, domain=\lowerquantile:\upperquantile, red, ultra thick] {\lognormalpdf(x, \lnmu, \lnsigma)};
\addplot [
        thick,
        color=blue,
        fill=red, 
        fill opacity=0.05
    ]
fill between[
        of=lognormalpathMID and baseMID,
    ];

% RHS
\path[name path=baseRHS] (axis cs:\upperquantile,0) -- (axis cs:\endx,0);
\addplot[name path=lognormalpathRHS, domain=\upperquantile:\endx, blue, ultra thick] {\lognormalpdf(x, \lnmu, \lnsigma)};
\addplot [
        thick,
        color=blue,
        fill=blue, 
        fill opacity=0.05
    ]
fill between[
        of=lognormalpathRHS and baseRHS,
    ];

% add vertical lines
\draw [dashed, line width=0.5mm] (\lowerquantile,0) -- (\lowerquantile,0.6) node [above] {$\mu_1$};
\draw [dashed, line width=0.5mm] (\upperquantile,0) -- (\upperquantile,0.6) node [above] {$\mu_2$};
\draw [decorate,decoration={brace, mirror, amplitude=5pt},yshift=-4pt] (\lowerquantile,0) -- (\upperquantile, 0) node [black,midway,yshift=-18pt,xshift=-1pt,text width=2.3cm, align=center] {\small width = \intervalwidth \; coverage = $\alpha$};
% \path[text width=1.9cm, align=center] (\lowerquantile, \textheight) -- (\upperquantile, \textheight) node[midway] {\small Non-symmetric quantiles};

\end{axis}
\end{tikzpicture}
\vspace{-0.25cm}
\caption{\small \textbf{Symmetric quantiles.} We compare two pairs of intervals on an identical (non-symmetric) log-normal probability distribution where in both cases a fixed level of coverage $\alpha$ is obtained. In the upper figure, the intervals are selected to be symmetric in terms of probability mass around the median $q = 0.5$ (i.e. the two lined regions \arealeft and \arearight contain equal probability mass) as in the case of quantile regression. In the lower figure, we remove this constraint and obtain a much narrower interval with identical coverage. }
\label{fig:skewillustration}
% \vspace{-0.25cm}
\end{figure}

In this work our \textbf{contributions} are threefold: \textbf{(1)} In \Cref{sec:limitations} we identify a substantial inefficiency in the standard procedure of first estimating quantiles from which predictive intervals are then derived. We show that this typically results in intervals with their midpoint fixed at the median which is undesirable for non-symmetric distributions (\Cref{fig:skewillustration}) \textit{or} requires learning more quantiles than necessary resulting in a more complex learning problem and, therefore, sub-optimal performance (see e.g. SQR in \Cref{sec:background}); \textbf{(2)} In \Cref{sec:proposedresolution} we propose a novel objective which directly learns intervals \textit{without} a priori specifying particular quantiles. We then equip this function with a regularization term that aids in selecting among possible interval choices by rewarding desirable properties such as narrower intervals or improved conditional coverage. This results in a method we term Relaxed Quantile Regression (RQR); \textbf{(3)} We theoretically show that the solution of our proposed objective achieves valid coverage in expectation with bounded variance (\Cref{sec:proposedresolution}). Empirically, we find that it results in superior performance to existing methods when evaluated on standard benchmarks (\Cref{sec:experiments}).

\section{Background} \label{sec:background}
In this section, we provide a summary of relevant existing works that convert neural networks from outputting point estimates to outputting predictive intervals (i.e. \textit{single model approaches}). In \Cref{app:extendedrelatedwork} we provide a broad summary of predictive interval generation more generally and highlight some of the unique advantages of the single model approach that we consider in this text.

\textbf{Deriving intervals from quantiles.} Throughout this work we consider the standard regression task consisting of input/target pairs $(\mathbf{X},Y) \in \mathbb{R}^d \times \mathbb{R}$ with $d \in \mathbb{N}$ where bold denotes vectors and non-bold denotes scalars. We express realizations of these random variables (i.e. data) using lower-case $(\mathbf{x},y)$. Denoting the cumulative distribution function of a probability distribution with $\mathbb{F}$, we recall that a \textit{quantile function} is given by $\mathbb{F}^{-1}(p) = \inf\{q \in \mathbb{R}: p \leq \mathbb{F}(q)\}$ where $p \in (0,1)$ is the desired \textit{quantile probability}. Specifically, this provides some \textit{quantile value} $q$ such that $\mathbb{P}(Y \leq q) = p$. In the machine learning setting, we are generally interested in the probability distribution of $Y$ \textit{conditional} on a given input $\mathbf{X}=\mathbf{x}$. Throughout this work, we refer to the task of estimating a quantile value $q$ corresponding to a particular quantile probability $p$ from data as \textit{estimating quantiles}. Once we have some function $\mu: \mathbb{R}^d \to \mathbb{R}$ for estimating quantiles (e.g. a neural network), we might wish to construct a conditional \textit{interval} such that $\mathbb{P}(\mu_1(\mathbf{x}) \leq Y \leq \mu_2(\mathbf{x})) = \alpha$ with $\mu_1(\mathbf{x}) < \mu_2(\mathbf{x})$\footnote{As a minor technical note, when a method is \textit{permutation invariant} (as with our proposed method described in \mbox{\Cref{sec:proposedresolution}}), this expression becomes $\mathbb{P}(\min(\mu_1(\mathbf{x}),\mu_2(\mathbf{x})) \leq Y \leq \max(\mu_1(\mathbf{x}),\mu_2(\mathbf{x})) ) = \alpha$ and thus no longer assumes one bound to be greater than the other.} and $\alpha \in (0,1)$. In other words, a pair of bounds between which the target will lie with some desired probability $\alpha$. We will generally drop the dependence of $\mu_1$ and $\mu_2$ on $\mathbf{x}$ for ease of notation. Clearly, this interval can be easily derived from the quantile function by simply noting that $\mathbb{P}(\mu_1 \leq Y \leq \mu_2) = \mathbb{P}(Y \leq \mu_2) - \mathbb{P}(Y \leq \mu_1)$. Therefore the problem of constructing valid intervals may be solved by approximating the quantile function to estimate the appropriate quantiles \textit{and then} constructing an interval from these quantile values. However, this assumes that we know which quantile probabilities we should use in advance as any pair of quantile probabilities $p_l < p_u$ such that $p_u - p_l = \alpha$ will result in an interval with $\alpha$ coverage. Therefore, existing approaches typically select symmetric quantile probabilities or estimate an infinite number of quantile probabilities from which intervals can be constructed. We discuss the consequences of this fact in \Cref{sec:limitations}.

\begin{table}
% \vspace{-0.25cm}
\caption{\small \textbf{Quantile regression methods.} Several desirable properties of these methods are considered. Note that \cxmark\!\: denotes partially satisfying a property as the SQR \& IR objectives can trade-off between coverage and interval width but do not directly consider conditional coverage. }
\label{tab:dantable}
\centering
\resizebox{\columnwidth}{!}{%
\begin{tabular}{c|ccccc} \toprule
    Property & QR & SQR & IR & Ours \\ \midrule
    \multirow{2}{*}{\shortstack{Suitable for non-centered\\ distributions}}  & \multirow{2}{*}{\xmark} & \multirow{2}{*}{\cmark} & \multirow{2}{*}{\cmark} & \multirow{2}{*}{\cmark} \\
    &&&& \\
    \multirow{2}{*}{\shortstack{Avoids explicitly learning all\\quantiles}}  & \multirow{2}{*}{\cmark}  & \multirow{2}{*}{\xmark}  &  \multirow{2}{*}{\cmark}  & \multirow{2}{*}{\cmark}  \\
    &&&& \\
    \multirow{2}{*}{\shortstack{Dynamically controls the trade-off \\ between any desirable objectives}}  & \multirow{2}{*}{\xmark}  & \multirow{2}{*}{\cxmark} & \multirow{2}{*}{\cxmark}  & \multirow{2}{*}{\cmark}  \\
    &&&& \\
    \multirow{2}{*}{Asymptotic coverage guarantees} & \multirow{2}{*}{\cmark}  & \multirow{2}{*}{\cmark}  &  \multirow{2}{*}{\xmark} & \multirow{2}{*}{\cmark}  \\
    &&&& \\
    \multirow{2}{*}{\shortstack{No Gaussian approx. or assumption \\of iid miscoverage of instances}} & \multirow{2}{*}{\cmark}  & \multirow{2}{*}{\cmark}  & \multirow{2}{*}{\xmark} & \multirow{2}{*}{\cmark}  \\
    &&&& \\ \bottomrule
\end{tabular}
}
% \vspace{-0.25cm}
\end{table}

\textbf{Quantile Regression (QR).} Here we refer to such methods that aim to provide intervals by accurately estimating quantiles. In the case of neural networks, the key distinction from standard point estimation methods which predict the expected value $\mathbb{E}(Y|\mathbf{X})$ lies in the choice of loss function. We require a loss function $\mathcal{L}: \cdot \to \mathbb{R}^+$ mapping from a quantile estimate to a scalar loss upon which we can apply gradient descent. Perhaps the most widely known approach is that of the \textit{pinball loss} function (also known as \textit{quantile loss}) of \citet{koenker1978regression, steinwart2011estimating}. For a quantile estimator $\mu: \mathbb{R}^d \to \mathbb{R}$, the pinball loss expression\footnote{We note that the Winkler Score (or interval score) objective \cite{dunsmore1968bayesian,winkler1972decision}, which also regularly appears in the literature, is proportional to the QR objective as shown in Appendix \ref{app:proofs} and learns all quantiles like SQR.}
\begin{equation*}
      \rho_{q}(\mu, \mathbf{x}, y) = 
    \begin{cases}
      q(y -\mu(\mathbf{x})) & \text{if $y -\mu(\mathbf{x}) \geq 0$}\\
      (q-1) (y -\mu(\mathbf{x})) & \text{if $y -\mu(\mathbf{x}) < 0$ }.\\
    \end{cases}       
\end{equation*}
{Then a strategy to construct an interval of targeted coverage level of $\alpha$ involves estimating two specific conditional quantiles, denoted as $q_l$ and $q_u$, where $q_l$ corresponds to the $\frac{1-\alpha}{2}$ quantile, and $q_u$ corresponds to the $1\!-\!\frac{1-\alpha}{2}$ quantile. Thus, the loss function optimized by the neural network is given by}
\begin{equation*}
\mathcal{L}^{\text{QR}}_\alpha((\mu_1, \mu_2), \mathbf{x}, y) = \rho_{\frac{1-\alpha}{2}}(\mu_1, \mathbf{x}, y) + \rho_{1-\frac{1-\alpha}{2}}(\mu_2, \mathbf{x}, y).    
\end{equation*}
This methodology ensures that the probability of the ground truth target $y$ falling within the interval $[{\mu_1},\mu_2]$ is $\alpha$, thereby establishing the desired mean coverage. In practice, two particular quantiles are typically predefined and learned using a single neural network with two outputs. We refer to this antecedent approach as \textit{quantile regression (QR)} throughout this work. 

\textbf{Simultaneous Quantile Regression (SQR).} Rather than predefining two particular quantiles, \citet{tagasovska2019single} propose to learn \textit{all} possible quantiles with a single output model by augmenting the neural network with an additional input for the desired quantile. We express this simultaneous quantile regressor as $\mu_q(\mathbf{x})$. Throughout training the quantile $q$ is stochastically selected from a uniform distribution where any quantile loss function may be applied (e.g. pinball).

A model trained using SQR is underspecified in the sense that there are infinite quantile pairs that can be used to construct a valid interval of a given coverage level at test time. Therefore we consider two strategies for selecting a particular interval: (a) \textit{SQR-C} selects the centered interval (as in QR) assuming that jointly learning all quantiles will produce a more accurate estimator; (b) \textit{SQR-N} selects the pair of (potentially non-centered) quantiles that produce the narrowest interval.

\textbf{Interval Regression (IR).} Learning the intervals directly without the intermediate step of first learning quantiles is an alternative approach that has emerged somewhat independently of the quantile regression literature. A method proposed in \citet{pearce2018high} achieves best-in-class empirical performance by introducing a loss function that attempts to balance coverage with interval width. By making the strong assumptions that (a) the cases of miscoverage are \emph{iid} and (b) batch sizes are sufficiently large for the binomial distribution to be well approximated by a Gaussian, the authors derive the following objective
\begin{multline*}
    \mathcal{L}^{\text{IR}}_\alpha((\mu_1, \mu_2), \mathbf{x}, y) = \frac{n}{c}(\mu_2- \mu_1) \cdot \mathbb{I}_{\mu_1\leq y \leq \mu_2}\\ + \lambda \frac{n}{\alpha (1 - \alpha)} \max(0,  \alpha - \frac{c}{n}) \\
\end{multline*}
Here $\lambda \in \mathbb{R}$ denotes a hyperparameter weighting term and $n$ denotes the batch size, and $c \vcentcolon= \sum_{i=1}^n \mathbb{I}_{\mu_1\leq y \leq \mu_2}$. Unfortunately, unlike quantile regression methods (and our proposed method later in this work), it is unclear if this method achieves theoretical coverage guarantees.

\textbf{Evaluation.} Competing approaches to obtaining prediction intervals are typically compared across a range of well-established desirable properties. As we only have access to a finite dataset in practice, asymptotic coverage is not guaranteed resulting in the need to evaluate \textit{calibration}. Ideally, we would produce intervals that accurately model the \textit{conditional} probability $P(\mu_1(\mathbf{x}) < y < \mu_2(\mathbf{x}) | \mathbf{x})$. Unfortunately, in the standard setting, we cannot directly estimate this quantity \citep{zhao2020individual} and instead often consider the \textit{marginal} probability $P(\mu_1(\mathbf{x}) < y < \mu_2(\mathbf{x}))$. We can estimate the marginal calibration on the test set using the Prediction Interval Coverage Probability (PICP) which simply measures the ratio of observations falling inside their intervals \citep{kuleshov2018accurate, tagasovska2019single}. 

Despite the aforementioned impossibility of exactly estimating the former conditional quantity, several proxy metrics have been proposed that test for independence between examples $\mathbf{x}$ and instances of miscoverage. These include using Pearson's correlation between interval width and miscoverage cases \citep{feldman2021improving} and the independence rewarding Hilbert-Schmidt independence criterion (HSIC) \citep{greenfeld2020robust}.

However, probably the most common criteria of evaluation considers aggregate \textit{interval width}. For a fixed level of coverage, narrower intervals are typically considered preferable. This can also prevent trivial solutions with some potentially infinite width intervals, which may still satisfy empirical tests of coverage. Sometimes referred to as \textit{sharpness} \citep{gneiting2007probabilistic} or \textit{adaptive coverage} \citep{seedat2023improving}, we primarily consider Mean Prediction Interval Width (MPIW) which measures the mean interval width across the test data \citep{tagasovska2019single}. We provide (a) some extended discussion on the motivation for these objectives in \Cref{sec:intervalwidth} and (b) a formal description of all evaluation metrics in \Cref{app:metrics} to ensure that this work is self-contained.  

\section{Relaxed Quantile Regression}
\subsection{Highlighting the Limitation of Existing Quantile Regression Methods} \label{sec:limitations}
Using quantile estimation as a means for obtaining predictive intervals may be viewed as a victim of Vapnik's famous heuristic that ``when solving a problem of interest, do not solve a more general problem as an intermediate step'' \citep{vapnik2006estimation}. The fundamental limitation of estimating predictive quantiles as an intermediate step toward estimating predictive intervals becomes apparent by closely considering the quantile regression approach. 

The standard quantile regression approach consists of selecting the two quantiles a priori such that the region between them results in a predictive interval with a desired level of coverage. In practice, for a desired coverage level $\alpha$, the standard approach is to select the $\frac{1-\alpha}{2}$ and $1-\frac{1-\alpha}{2}$ quantiles. However, selecting a specific pair of \textit{non-symmetric} quantiles would require knowledge of the underlying noise distribution which is unknown. As illustrated in \Cref{fig:skewillustration}, when the underlying noise distribution around $Y$ is, in fact, non-symmetric this results in wider than necessary intervals due to being arbitrarily centered (in terms of probability mass) around the median\footnote{Of course, if centered intervals are \textit{required} then quantile regression is appropriate. However, this is unlikely to be a wise requirement in the prevalent case of a skewed target distribution.}. On real-world tasks, we should expect non-symmetric noise distributions to be ubiquitous. This has been highlighted in previous work \citep{tagasovska2019single} and we further illustrate this by including histograms of the target distributions of popular, real-world datasets in \Cref{tab:widthresultsfull} and their summary statistics in \Cref{tab:summarystats}. Whilst the true noise distribution cannot be known on real data, the shapes of these empirical distributions suggest that perfect symmetry is a very strong assumption to hold over natural phenomena.

The existing resolution to this issue, as introduced by \citet{tagasovska2019single}, is to learn \textit{all} possible quantile probabilities in $(0,1)$ (see e.g. SQR in \Cref{sec:background}). This is explicitly aimed at rectifying the aforementioned limitation as the authors note that it enables them to ``model non-Gaussian, skewed, asymmetric, multimodal, and heteroskedastic aleatoric noise in data''. The idea being that, once all quantiles are learned, any pair may be selected such that they satisfy $\alpha$ coverage in addition to other qualities (e.g. narrower intervals). However, this introduces a significantly more challenging learning problem of estimating an infinite number of quantiles rather than exactly two for each example which can negatively impact the performance of the underlying quantile estimator. Furthermore, it is not obvious how to select a specific interval at test time as simply selecting the narrowest valid interval is likely to induce a bias that can negatively impact empirical coverage. In our experiments in \Cref{sec:experiments}, we show that these drawbacks result in this approach generally achieving inferior performance when compared against the former approach despite its added flexibility.

In \Cref{sec:proposedresolution} we introduce an alternative approach that solves the interval estimation problem whilst removing the median-centering constraint. This is achieved by relaxing the requirement to select symmetric quantiles a priori from which intervals can be constructed and, instead, estimating a pair of potentially asymmetric intervals \emph{directly}. Although the previous work of \citet{pearce2018high} also considers a direct interval regression approach, ours is the first to build upon the quantile regression literature, thereby maintaining a strong theoretical foundation and converging to a solution that achieves coverage guarantees in expectation. This is reflected in superior coverage when empirically evaluated in \Cref{sec:experiments}. \Cref{tab:dantable} summarizes the key differences between our proposed method and those of previous works.

\subsection{Proposed Resolution: Relaxed Quantile Regression} \label{sec:proposedresolution}
\textbf{Relaxed Quantile Regression (RQR).} We begin by introducing a novel objective which \textit{directly} learns intervals without the intermediate step of prespecifying quantiles (i.e. \textit{relaxing} the quantile learning requirement). For a targeted coverage level $\alpha$ and a neural network outputting two interval bounds $(\mu_1,\mu_2)$ we minimize
% \begin{equation} 
\begin{align}
    \mathcal{L}^{\text{RQR}}_{\alpha}((\mu_1,\mu_2), \mathbf{x}, y) = 
    \begin{cases}
      \alpha \kappa & \text{if $\kappa \geq 0$}\\
      (\alpha-1) \kappa& \text{if $\kappa < 0$ }\label{eqn:RQRbaseobjective}\\
    \end{cases} \\     
    \text{with } \kappa = (y - \mu_1)(y - \mu_2). \notag
% \label{eqn:RQRbaseobjective}
\end{align}
The key intuition is that $\text{sign}(\kappa)$ informs us whether the target falls between the two bounds, thus allowing us to optimize for our desired coverage level $\alpha$. This expression makes no assumptions about the shape of the target noise distribution. It does not require the interval bounds to be placed at specific quantile values, nor does it require the neural network to explicitly model all quantiles. As neither $\mu_1$ nor $\mu_2$ are explicitly tied to being the upper or lower bound, we select these bounds to be $\max(\mu_1, \mu_2)$ and $\min(\mu_1, \mu_2)$ respectively. This makes the expression \textit{permutation invariant} and thus avoids the crossing interval problem faced by quantile regression methods \cite{park2022learning} (see \Cref{sec:crossingbounds} for further discussion on this point).
\Cref{thm:RQRcoverage} provides formal guarantees that the minimization of this loss function in expectation yields a valid interval i.e. the coverage rate $\alpha$ is achieved. Note that proofs for all theorems and propositions are provided in \Cref{app:proofs}.
\begin{restatable}[RQR In-sample Coverage]{theorem}{RQRcoverage} For any random variable $Y$ associated with an input $x$, $\forall \, \alpha \in [0,1],$ 
\begin{multline*}
(\mu_1^*(x),\mu_2^*(x)) = \argmin_{\mu_1,\mu_2}{\{ \mathbb{E}_Y(\mathcal{L}^{\textup{RQR}}_{\alpha}((\mu_1,\mu_2), x, Y))\}}\IfRestatedTF{}{\\}\implies \mathbb{P}(\mu_1^*(x) < Y < \mu_2^*(x)) = \alpha. \\
\end{multline*}
\label{thm:RQRcoverage}
\end{restatable}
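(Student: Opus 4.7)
The plan is to establish the coverage identity via the first-order optimality conditions for the (conditional) expected loss. Fix $x$, let $F$ denote the conditional CDF of $Y$ given $\mathbf{X}=x$, and use the permutation invariance noted after \eqref{eqn:RQRbaseobjective} to assume $\mu_1 \leq \mu_2$. Under this ordering $\kappa = (Y-\mu_1)(Y-\mu_2)$ is negative exactly when $Y \in (\mu_1,\mu_2)$, so using the identity $\alpha\kappa\mathbb{I}_{\kappa\geq 0} + (\alpha-1)\kappa\mathbb{I}_{\kappa<0} = \alpha\kappa - \kappa\mathbb{I}_{\kappa<0}$ the objective rewrites as
\begin{equation*}
L(\mu_1,\mu_2) \;=\; \alpha\,\mathbb{E}[(Y-\mu_1)(Y-\mu_2)] \;-\; \int_{\mu_1}^{\mu_2}(y-\mu_1)(y-\mu_2)\,dF(y).
\end{equation*}

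Next I would differentiate in each coordinate. The first (quadratic) term contributes $\alpha(\mu_2-\mathbb{E}[Y])$ to $\partial L/\partial\mu_1$ and $\alpha(\mu_1-\mathbb{E}[Y])$ to $\partial L/\partial\mu_2$. Leibniz's rule on the second term is especially clean because its integrand vanishes at both endpoints $y=\mu_1$ and $y=\mu_2$, so every boundary contribution drops out and the derivatives reduce to $\int_{\mu_1}^{\mu_2}(\mu_2-y)\,dF(y)$ and $-\int_{\mu_1}^{\mu_2}(y-\mu_1)\,dF(y)$, respectively. Setting $\nabla L(\mu_1^*,\mu_2^*)=0$ and \emph{adding} the two resulting conditions cancels the $\mathbb{E}[Y]$ terms and collapses the right-hand side via $(\mu_2^*-y)+(y-\mu_1^*)=\mu_2^*-\mu_1^*$, producing
\begin{equation*}
\alpha(\mu_2^*-\mu_1^*) \;=\; (\mu_2^*-\mu_1^*)\,\mathbb{P}(\mu_1^*\leq Y\leq\mu_2^*).
\end{equation*}
Dividing by $\mu_2^*-\mu_1^*$ in the non-degenerate case gives the claimed coverage.

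The main technical obstacle is the degenerate case $\mu_1^*=\mu_2^*$: this can only be optimal when $\alpha=0$, in which setting $L\equiv 0$ and the trivial coverage $0=\alpha$ is still correct. A smaller subtlety is justifying differentiation under the integral sign, which requires mild regularity on $F$ (a density, or at least no atom at the optimizers); the same regularity lets us pass between $\mathbb{P}(\mu_1^*\leq Y\leq\mu_2^*)$ and the strict-inequality statement $\mathbb{P}(\mu_1^*<Y<\mu_2^*)=\alpha$ appearing in the theorem. Note that convexity of $L$ is \emph{not} required — any argmin must satisfy the FOCs, and the FOCs alone imply the identity — and importantly no symmetry assumption on $F$ is invoked anywhere, which is precisely what distinguishes RQR from the median-centered quantile-regression approach.
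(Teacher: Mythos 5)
Your strategy is the same as the paper's: rewrite the loss as $(Y-\mu_1)(Y-\mu_2)(\alpha-\mathbb{I}_{Y\in[\mu_1,\mu_2]})$, differentiate the expectation in each coordinate, take a linear combination of the two first-order conditions so that the integrand collapses via $(\mu_2-y)+(y-\mu_1)=\mu_2-\mu_1$, and divide by the interval width. So the plan is sound and matches the paper. However, there is a sign slip that matters. Differentiating $L(\mu_1,\mu_2)=\alpha\,\mathbb{E}[(Y-\mu_1)(Y-\mu_2)]-\int_{\mu_1}^{\mu_2}(y-\mu_1)(y-\mu_2)\,dF(y)$ gives
\begin{align*}
\frac{\partial L}{\partial\mu_1} &= \alpha\bigl(\mu_2-\mathbb{E}[Y]\bigr) - \int_{\mu_1}^{\mu_2}(\mu_2-y)\,dF(y),\\
\frac{\partial L}{\partial\mu_2} &= \alpha\bigl(\mu_1-\mathbb{E}[Y]\bigr) + \int_{\mu_1}^{\mu_2}(y-\mu_1)\,dF(y),
\end{align*}
whereas you record the integral contributions with flipped signs. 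Consequently the combination that cancels $\mathbb{E}[Y]$ and produces $(\mu_2-y)+(y-\mu_1)$ is the \emph{difference} $\partial L/\partial\mu_1 - \partial L/\partial\mu_2 = 0$, not the sum as you state; this is exactly what the paper's proof does. If you trace through with your flipped partials, \emph{adding} leaves an uncancelled $\mathbb{E}[Y]$ term and \emph{subtracting} yields $\mathbb{P}=-\alpha$, so the argument does not close as written even though the intended mechanism is correct. With the signs fixed and the difference taken, you get $\alpha(\mu_2^*-\mu_1^*)=(\mu_2^*-\mu_1^*)\,\mathbb{P}(\mu_1^*\le Y\le\mu_2^*)$ and the conclusion follows. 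Your explicit discussion of the degenerate $\mu_1^*=\mu_2^*$ case, of the regularity needed for Leibniz and for equating open and closed-interval probabilities, and your remark that convexity is not needed are all reasonable additions that the paper's proof glosses over.
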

\vspace{-1.25cm}

Furthermore, in Theorem \ref{thm:RQRubiased} we derive additional desirable properties of the RQR objective. Specifically, that it obtains correct coverage in expectation with a variance bound that decreases in proportion to the inverse data size (i.e. increasing data size implies lower variance). This proof relies on first showing that this objective achieves the correct finite in-sample coverage -- which we provide in Theorem \ref{thm:finiteRQR} of \cref{app:proofs}.

\begin{customthm}{3.3}[RQR is Unbiased and Consistent]\label{thm:RQRubiased}
% \begin{restatable}[RQR is Unbiased and Consistent]{theorem}{RQRasymptoticdata}
 For any random variable $Y$ associated with an input $x$, we consider N realizations of this random variable : $\{y_i\}_{i=1}^{N}$. $(\mu_{1,N},\mu_{2,N})$ are the bounds of our estimator trained on these N samples. $ \forall \, \alpha \in [0,1]$, we name $Q_N$ the absolute true miscoverage of our estimator  
 \begin{equation*}
     Q_N = |\int_{\mu_{1,N}}^{\mu_{2,N}}dPy - \alpha|
 \end{equation*}

Then $E[Q_N] = 0$ and the variance of $Q_N$ is bounded by $\frac{1}{4N}$.
% \label{thm:RQRubiased}
\end{customthm}

These theoretical properties ensure that the RQR expression is well-motivated and provides a suitable objective for the goal of generating well-calibrated prediction intervals in practice. We further illustrate this by analyzing the objective's gradients in \cref{sec:analytic}.

\textbf{Rewarding preferable solutions.} Given the added flexibility of shifting intervals rather than bounding them around the median, there now exists a potentially infinite number of competing solutions that achieve the desired level of coverage. We note that the RQR expression is largely agnostic to solutions that obtain additional desirable properties such as narrower interval widths or improved conditional coverage which have been discussed extensively in previous works \citep[see e.g.][]{feldman2021improving, tagasovska2019single}. Therefore, we would like to induce some preference among solutions. A natural strategy is to upweight preferable solutions (e.g. narrower intervals or improved conditional coverage) via an additive regularization term $\mathcal{R}$ and a scalar weighting term $\lambda \in \mathbb{R}^+$. Thereby we provide practitioners with the flexibility to choose whichever interval properties provide the most utility. Therefore, the complete RQR-$\mathcal{R}$ (regularized) objective takes the form

\begin{equation} \label{eqn:RQR-R}
    \mathcal{L}^{\text{RQR-}\mathcal{R}}_\alpha((\mu_1,\mu_2), \mathbf{x}, y)\! =\! \mathcal{L}^{\text{RQR}}_{\alpha}((\mu_1,\mu_2), \mathbf{x}, y) + \lambda \cdot \mathcal{R}.
\end{equation}

Given this structure, we now introduce two specific choices for $\mathcal{R}$.

\textbf{\circled{1} RQR-W: Width Minimizing $\mathcal{R}$.} As discussed in \Cref{sec:background} \& \Cref{sec:intervalwidth}, interval width is a principal criterion for evaluating methods for obtaining predictive intervals \citep{tagasovska2019single, feldman2021improving, romano2019conformalized}. A direct approach for minimizing interval width is to penalize the sample-wise squared interval width such that 
\begin{multline*}
    \mathcal{L}^{\text{RQR-W}}_\alpha((\mu_1,\mu_2), \mathbf{x}, y) = \mathcal{L}^{\text{RQR}}_{\alpha}((\mu_1,\mu_2), \mathbf{x}, y)\\ + \lambda\frac{(\mu_2 - \mu_1)^2}{2}.
\end{multline*}
\begin{equation}
    P(Y \in (\mu_1, \mu_2)) = \alpha
\end{equation}
By integrating this penalty term, we aim to effectively navigate the landscape of potential solutions, encouraging the model to prioritize intervals of reduced length whilst still maintaining the targeted level of coverage. Analogous to the approach taken in \Cref{thm:RQRcoverage}, we can extend our analysis to consider this complete loss function. We find that the introduced penalty term induces a bias to the coverage of the interval estimator -- the result is now $\mathbb{P}(\mu_1^* < Y < \mu_2^*) = \alpha -2\lambda$. However, we can easily remove the bias by modifying the targeted coverage rate. By choosing $\hat{\alpha} = \alpha + 2\lambda$, we obtain $\mathbb{P}(\mu_1^* < Y < \mu_2^*) = \hat{\alpha} -2\lambda = \alpha$ as desired.

\begin{customthm}{3.4}[RQR-W In-sample Coverage]\label{thm:WMcoverage}
For any random  variable $Y$ associated with an input x, $\forall \, \alpha \in [0,1],$
\begin{multline*}
(\mu_1^*(x),\mu_2^*(x)) = \argmin_{\mu_1,\mu_2}{\{ \mathbb{E}_Y(\mathcal{L}^{\text{RQR-W}}_{\alpha+2\lambda} (\mu_1,\mu_2),x,Y))\}}\\\implies \mathbb{P}(\mu_1^*(x) < Y < \mu_2^*(x)) = \alpha 
\end{multline*}
\end{customthm}
\vspace{-0.5cm}

In the supplemental material we also show that, analogous to RQR, RQR-W obtains correct finite in-sample coverage (Theorem \ref{thm:finiteRQRW}) and is unbiased and consistent (Theorem \ref{thm:RQRWasymptoticdata}).
An important additional benefit of the penalty term, presented in Proposition \ref{prop:WMunique}, is that it makes the loss function convex for a wide range of target distributions and $\lambda$. Hence, leading to a welcome additional result: the existence and uniqueness of its minimum.

% \begin{restatable}[Existence and Uniqueness of Solution]{proposition}{WMunique} 
\begin{customprop}{3.7}[Existence and Uniqueness of Solution]\label{prop:WMunique}
$\mu_1^{min}$ and $\mu_2^{max}$ denote the bounds of our optimization problem. For a target distribution $Y$ with a cumulative distribution function that is k-Lipschitz continuous with $k < 1+\frac{\alpha}{\mu_2^{max}-\mu_1^{min}}$, when $\lambda > max(0,\int_{\mu_1^{min}}^{\mu_2^{max}} d\mathbb{P}_Y(y) - \alpha)$, the minimum of $\mathcal{L}^{\textup{RQR-W}}_{\alpha+2\lambda}$ exists and is unique.
\end{customprop}

 We later empirically verify (see \Cref{sec:experiments}) that this objective does perform well on real-world data. 

\textbf{\circled{2} RQR-O: Width-Coverage Independence $\mathcal{R}$.} As discussed in \Cref{sec:background}, whilst interval construction methods are most commonly evaluated based on their resulting interval width for a realized coverage level (sometimes referred to as the high-quality principle \citep{pearce2018high}), alternative objectives may also be desirable. \citet{feldman2021improving} introduced \textit{orthogonal quantile regression} (OQR) which instead optimized for a notion of \textit{conditional coverage} rather than minimizing interval width. Specifically, the authors introduce a regularization term that promotes independence between the size of the intervals and occurrences of a (mis)coverage event. They combine their proposed regularization term with QR and report significant gains in measures of conditional coverage. We can easily combine their term with our RQR instead by setting it as the regularization term $\mathcal{R}$ in \Cref{eqn:RQR-R} where
\begin{equation*}
    \mathcal{R}(\cdot) = \left|\frac{\text{Cov}(\mathbf{w}, \mathbf{m})}{\text{Var}(\mathbf{w})\text{Var}(\mathbf{m})}\right|.
\end{equation*}
With $\mathbf{w}$ denoting the vector of interval widths where ${w}_i = |\mu_2(\mathbf{x}_i) - \mu_1(\mathbf{x}_i)|$ and $\mathbf{m}$ denoting the indicator vector of coverage events where ${m}_i = \mathbb{I}_{y_i \in [\mu_1(\mathbf{x}_i), \mu_2(\mathbf{x}_i)]}$ -- both calculated on the training data. Thus, this regularization term can simply be interpreted as the Pearson correlation between the interval widths and instances of coverage or miscoverage\footnote{Note that taking the absolute value results in this term penalizing correlation between interval width and instances of \textit{either} coverage or miscoverage (as we would desire).}. We refer to this complete objective as RQR-O (orthogonal) and, as with the other objectives, we provide proof for its expected coverage in \Cref{app:proofs}. 

\textbf{Trading-off width and orthogonality.} We note that there is typically a trade-off between minimizing interval width and maximizing conditional coverage. As we later observe in \Cref{fig:kin8nm}, obtaining near optimal conditional coverage generally requires wider intervals than is strictly necessary for obtaining valid marginal coverage. We emphasize that in this work we are agnostic as to which qualities are preferable and, instead, we enable the practitioner to make this decision based on their specific application. In \cref{sec:experiments}, we will empirically verify that our proposed objective behaves as expected and successfully utilizes its added flexibility to outperform benchmark methods at achieving their respective goals whilst maintaining empirical coverage.

\subsection{Gradient Analysis of the RQR objective}\label{sec:analytic}
We now further analyze how our proposed objective in \cref{eqn:RQRbaseobjective} behaves analytically by investigating a concrete setting. In particular, let us consider a specific interval for a given observation $x$ and suppose that the bounds $(\mu_1, \mu_2) = (0, 1)$ are output by a model. We are now interested in how these two bounds will be updated by gradient descent if we observe various potential values of the ground truth label $y$. This is captured by the gradients $\frac{\partial \mathcal{L}}{\partial \mu_1}$ and $\frac{\partial \mathcal{L}}{\partial \mu_2}$ which describe the \textit{direction} and \textit{relative magnitude} in which each bound will move.

\begin{figure}
    \centering
    \includegraphics[width=0.95\columnwidth]{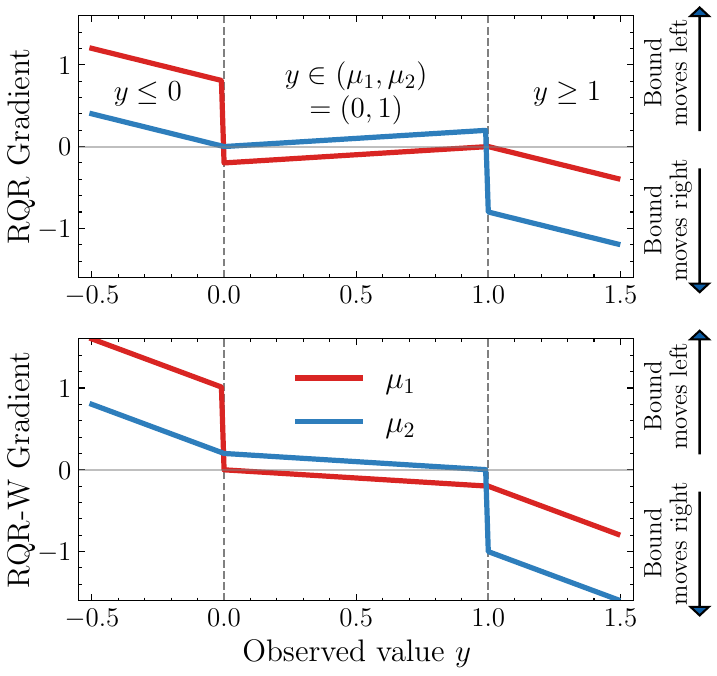}
    \caption{\textbf{Gradient analysis.} The gradients with respect to the RQR objective (\textit{upper}) and RQR-W objective (\textit{lower}) for a single $x$ over a range of potential values of $y$. The current predicted bounds $(\mu_1, \mu_2) = (0, 1)$ are updated using these gradients.}
    \label{fig:gradients}
\end{figure}
In \cref{fig:gradients} (upper) we provide a plot capturing the analytic value of these respective gradients for RQR at a range of potential values of the ground truth $y$. Positive gradient values (i.e. the upper half of the subplot) indicate that a given bound shifts to the left on the number line while, conversely, negative gradient values (i.e. the lower half of the subplot) indicate that a given bound shifts to the right. Then the absolute values capture the magnitude of how much that bound will shift. 

In the left-hand side region ($y \leq 0$) where the $y$ values fall outside on the left of the interval, we note that both bounds will shift to the left, with the nearer (lower) bound moving by the greatest magnitude. Similarly, in the right-hand side region ($y \geq 1$) we observe the opposite where both bounds move to the right. Finally, when the target does fall inside the interval ($y \in (0, 1)$) the bounds are adjusted by a smaller magnitude and are narrowed. When applied collectively across the entire training set, this results in the intervals being widened for miscoverage events and narrowed for coverage events as we might desire.

In contrast, in \cref{fig:gradients} (lower), when we include the width-reducing regularization term (i.e. the RQR-W objective with $\lambda = 0.1$) this plot changes. The key difference in this case is in the center region where the target falls inside the interval ($y \in (0, 1)$). As with the upper figure, the interval still narrows – but now the further bound from the target $y$ has the larger gradient allowing the interval to narrow by a greater amount before a miscoverage event will occur. In other words, while previously the nearest bound to the target $y$ narrowed the most, now the further bound narrows by a greater magnitude.

Given that this analysis shows a narrowing for coverage events and a widening for miscoverage events \textit{for a single example}, we might ask how the solution of the respective objectives differ \textit{in aggregate} across the entire training data in which we observe both cases. In \cref{app:specialcase} we provide a special case where we can derive a closed-form solution in which we find that the RQR-W objective does indeed obtain a narrower solution than RQR with equal coverage.

\section{Experiments} \label{sec:experiments}
\begin{table}
\caption{\small \textbf{Empirical verification}. Coverage and width ($\pm$ standard errors) are assessed for producing predictive intervals on symmetric and non-symmetric noise distributions with $\alpha = 0.8$. As expected, for realistic, skewed noise distributions, width-minimizing RQR (RQR-W) produces narrower intervals.}
\label{tab:synthetic}
\centering
\resizebox{\columnwidth}{!}{%
\begin{tabular}{c|ccc} \toprule
     & Coverage & Width & Dist. Histogram \\ \midrule
     QR & 0.81 ($\pm$0.007) & 0.26 ($\pm$0.003) & \multirow{3}{*}{\parbox[b]{2cm}{\includegraphics[width=\linewidth, height=1cm]{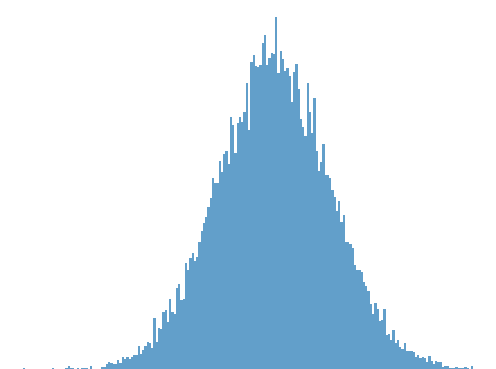}}} \\
     RQR (w/o reg) & 0.81 ($\pm$0.003) & 0.26 ($\pm$0.002) & \\
     RQR-W (with reg) & 0.82 ($\pm$0.004) & 0.26 ($\pm$0.002) & \\ \midrule
     QR & 0.80 ($\pm$0.002) & 1.61 ($\pm$0.007) & \multirow{3}{*}{\parbox[b]{2cm}{\includegraphics[width=\linewidth, height=1cm]{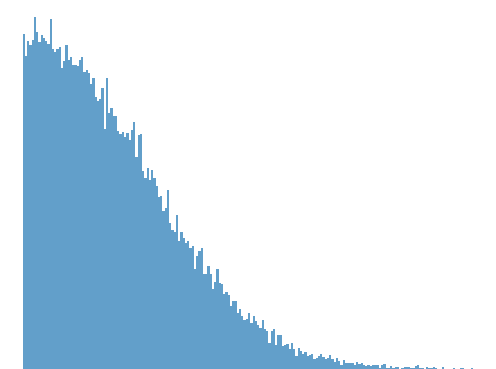}}} \\
     RQR (w/o reg) & 0.80 ($\pm$0.007) & 1.72 ($\pm$0.010) & \\
     RQR-W (with reg) & 0.81 ($\pm$0.002) & 1.50 ($\pm$0.005) & \\ \bottomrule
\end{tabular}}
\vspace{-0.25cm}
\end{table}
\textbf{Empirical verification.}\footnote{Code at \url{https://github.com/TPouplin/RQR}.} We begin by empirically verifying the predicted gains of the RQR objective over quantile regression on non-symmetric noise distributions. As the noise distribution cannot be known on real-world data, here we generate synthetic data according to a known process. The data is generated according to a data-generating process in which the label is determined according to $Y = X + \epsilon$, where $X \in \mathbb{R}$ represents a deterministic component which is set as constant and $\epsilon \in \mathbb{R}$ is the noise component. Then the noise distribution is selected as either a (symmetric) Gaussian or a (non-symmetric) truncated Gaussian. We fit a simple linear neural network consisting of just a single layer. As illustrated in \Cref{tab:synthetic}, the empirical results match our theoretical expectations. All methods perform equally well on the symmetric Gaussian noise where intervals centered at the median are optimal. However, QR fails to achieve optimal width on the truncated Gaussian due to being arbitrarily centered at the median. Whilst RQR (w/o reg) is unbiased, it is not sufficiently incentivized to produce narrower intervals and thus performs similarly to QR in that regard. In line with our analytic observations that motivated this objective, only RQR-W (with reg) achieves the optimally narrow interval solution.

\begin{figure}[h]
% \vspace{-0.25cm}
    \centering

    \begin{subfigure}{\columnwidth}
    \includegraphics[width=\columnwidth]{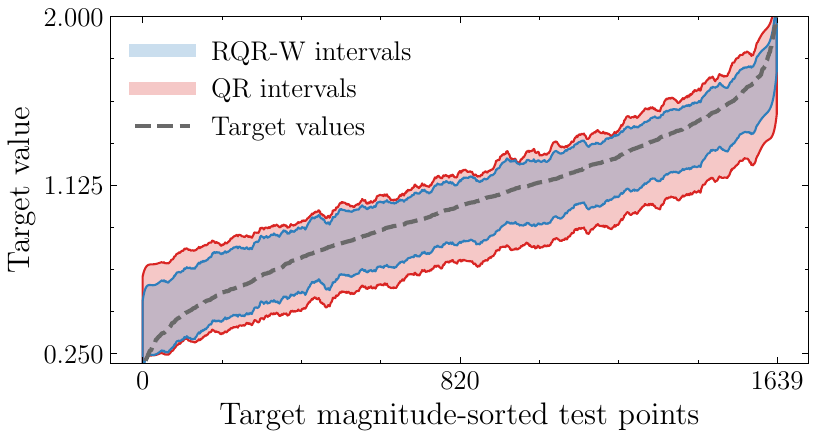}
    % \caption{Figure A}
    \end{subfigure}

    \begin{subfigure}{\columnwidth}
    \includegraphics[width=\columnwidth]{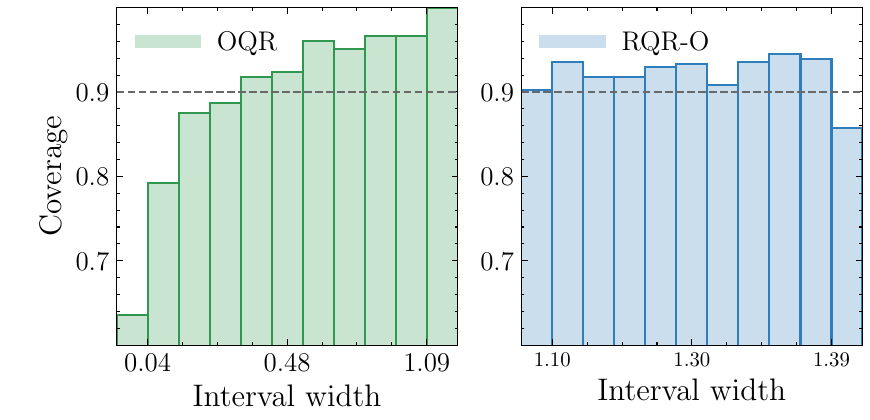}
    % \caption{Figure B}
    \end{subfigure}

    % \vspace{-0.5cm}
    \caption{\small \textbf{Applying RQR-W \& RQR-O in practice.} Resulting intervals on robotics distance estimation task. \textbf{Upper:} The RQR-W objective achieves generally narrower intervals across the 1639 test examples. \textbf{Lower:} The RQR-O objective achieves more consistent coverage across different interval widths.}
    \label{fig:kin8nm}
\end{figure}

\textbf{Applying RQR-W \& RQR-O in practice.} We now turn our attention to applying our proposed interval prediction approach to practical tasks in high-stakes domains. To illustrate, we consider a task in the robotics domain in which safety and reliability are essential, especially when applied in close proximity to humans \citep{baek2023safety}. We consider the task of estimating a robot arm-effector's distance from a target given noisy measurements of inputs such as joint positions or twist angles. The \textit{kin8nm} dataset provides an example of such a task consisting of 8192 measurements \citep{ghahramani1996kin}. For this problem, we desire accurate intervals such that the arm may be used safely and effectively and, depending on the specifics of the application, either interval width or conditional coverage may be important. In \Cref{fig:kin8nm} we provide the results of applying RQR to this problem using the same experimental setup as described later in the benchmarking experiments. 

In the \textit{upper} subplot, we optimize for interval width using RQR-W and compare the resulting intervals to those using QR. For presentation we sort the test set points according to increasing target magnitude and apply a Savitzky–Golay filter to smooth the intervals (see \Cref{fig:unsmoothkin8nm} for a version without smoothing applied). We observe that the RQR-W intervals are noticeably narrower despite providing the same marginal coverage. Then, in the \textit{lower} subplot, we demonstrate the effects of instead optimizing for conditional coverage using RQR-O and evaluate the empirical coverage when the test set is divided into subgroups based on interval width. In the left of the two histograms, we observe that the baseline method, OQR, under-covers for narrower intervals whilst over-covering for wider intervals. This is in contrast to RQR-O on the right which achieves generally balanced coverage across all interval widths. Given that conditional coverage is the exclusive goal in this case, the RQR-O solution discovers that wider intervals are necessary to achieve this. As a result, it is able to ensure that the probability of error is not dependent on interval width resulting in a more consistent coverage across the data.

Therefore, unlike previous works, RQR may be viewed as a general-purpose approach to constructing intervals in which the practitioner may choose to prioritize among additional interval properties depending on a particular application. In this particular robotics example, minimizing interval width may result in a more accurate estimate of the object's distance while improving conditional coverage may help prevent specific failure modes.

\begin{table}
\vspace{-0.25cm}
\caption{\small \textbf{Benchmarking RQR-W}. We evaluate each baseline method against RQR-W across 12 datasets with the aggregated results presented. RQR-W obtains the desired coverage level more frequently (coverage obtained) with a lower magnitude of miscoverage from the desired level (mean miscoverage) and achieves the lowest or joint lowest ($\pm$ 1 standard error) mean interval width more frequently (narrowest intervals) across the benchmark tasks.}
\label{tab:widthresults}
\centering
\resizebox{\columnwidth}{!}{
\begin{tabular}{cccccc} \toprule
     & RQR-W & QR & SQR-C & SQR-N & IR \\ \midrule
    Coverage obtained &  \textbf{12/12} &  \textbf{12/12} &  9/12 &  5/12 &   11/12 \\
    Mean miscoverage (\%) &  \textbf{0.93} &  1.45 &  2.50 &  3.73 &   2.88 \\
    Narrowest intervals &  \textbf{9/12} &  8/12 &  0/12 &  1/12 &   0/12 \\
    \bottomrule
\end{tabular}}
\vspace{-0.25cm}
\end{table}

\textbf{Benchmarking RQR-W.} We now proceed to investigate the performance of RQR-W more broadly on the standard quantile regression benchmark tasks used in \citet{tagasovska2019single, chung2021beyond, pearce2018high} consisting of nine datasets from \citet{asuncion2007uci}. We extend this benchmark, as suggested by the conference reviewers, with three additional datasets from \citet{grinsztajn2022tree}. We follow the preprocessing and experimental protocol described in \Cref{app:experimentdetails} in line with previous works. To summarize, we train two-layer neural networks using a grid search to find optimal hyperparameters. All experiments are repeated over 10 seeds for each dataset. For building SQR-C intervals, we followed the method prescribed by the authors which consists of selecting the symmetric (0.05, 0.95) intervals. We provide a summary of our results in \Cref{tab:widthresults}. We report the proportion of datasets in which each method obtains coverage (within a 2.5\% margin of error). We then report the mean absolute miscoverage distance from the desired level (90\%) across the 12 datasets. Finally, we also provide the frequency with which each method achieves the (possibly joint) narrowest mean interval width. These results illustrate that, on aggregate, the added flexibility of RQR-W is better able to balance our desire for narrow intervals \emph{whilst also} obtaining our desired level of coverage than existing baselines. Gains in realized coverage are a convenient side effect of two factors: (1) quantile regression introduces two sources of error (i.e. estimating each of the two quantiles) in which an error on \emph{either} estimate can result in miscoverage. This is not the case for methods that estimate the interval directly\footnote{Note that we are not the first to notice this fact, see e.g. Sec. 2 of \citet{takeuchi2006nonparametric} for a much earlier work that has commented on this limitation.}; (2) minimizing interval width selects intervals that are likely to lie in denser regions and are likely to exhibit lower variance in their estimates (see \Cref{sec:intervalwidth} for an extended discussion on this point).

The complete results for each dataset are included in \Cref{sec:RQRWcomplete} where we provide the resulting coverage and MPIW with means and standard errors of means reported. There we also include histograms of the target distributions for each dataset to highlight the point that non-symmetric distributions should be the \textit{standard expectation} on real-world regression tasks. Note that the target coverage level is set to 90\% throughout our experiments and we ablate for other coverage levels in \Cref{sec:otheralpha} where we observe similar results. Overall, the results in this section demonstrate that the added flexibility of circumventing the standard step of learning predefined quantiles can be utilized to obtain narrower intervals in practice.

\textbf{Benchmarking RQR-O.} In a similar vein, we evaluate RQR-O for its effectiveness in improving measures of conditional coverage. In this case, we compare to the original OQR work of \citet{feldman2021improving}. We evaluate both methods on the same benchmark datasets as previously and follow the same experimental protocol as in \citet{feldman2021improving}. The key distinction in this experiment is that the coefficient of the regularization parameter for both methods is incrementally reduced until empirical coverage is achieved. This is because a comparison of conditional coverage using these metrics is only meaningful if both methods achieve a similar level of empirical marginal coverage. Again, the experimental setup is described in detail in \Cref{app:experimentdetails}. We investigate this regularization term's effectiveness at achieving its stated objective of enforcing orthogonality between interval width and instances of miscoverage. Since both methods use an identical regularization expression, the gains obtained by RQR-O in this section are due to pairing this term with our RQR objective from \Cref{eqn:RQRbaseobjective} rather than existing quantile regression objectives that suffer from the limitations discussed in \Cref{sec:limitations}. The results are provided in \Cref{tab:orthogonalresults} where we compare performance based on a test set evaluation of coverage, $\%$ improvement in Pearson correlation over OQR, and $\%$ improvement in HSIC over OQR. We generally find that RQR-O achieves a significant improvement in these measures of conditional coverage, again indicating that the added flexibility of this approach enables a more favorable solution to be found. An important takeaway from these results is that, since Pearson's correlation is the regularization objective used by both methods, the gains in performance when considering it as an evaluation metric provide direct evidence that the added flexibility provided by the RQR loss function (due to not being centered around the median) enables it to find a better solution in the auxiliary task (in this case, minimizing Pearson's correlation between instances of miscoverage and interval width).

\begin{table}
\caption{\small \textbf{Benchmarking RQR-O.} We compare our proposed loss function to those used in OQR for achieving improved conditional coverage as evaluated using standard metrics (see \Cref{sec:background}). We report the empirical coverage achieved and the \% improvement obtained over OQR in Pearson correlation and HSIC ($\pm$ a standard error).}
\label{tab:orthogonalresults}
\centering
\resizebox{\columnwidth}{!}{%
\begin{tabular}{c@{\hskip 5pt}c@{\hskip 5pt}c@{\hskip 5pt}c@{\hskip 5pt}c} \toprule
    \multirow{2}{*}{Dataset} & \multirow{2}{*}{\shortstack[c]{RQR-O (ours)\\coverage}} & \multirow{2}{*}{\shortstack[c]{OQR\\coverage}} & \multirow{2}{*}{\shortstack[c]{Pearson\\correlation}} & \multirow{2}{*}{HSIC} \\ 
    &&&& \\ \midrule
    {concrete} &   88.75 (0.14) &  87.54 (0.12) &    +80.19 (0.72) &    +98.01 (0.12)  \\
    {power} &   90.00 (0.05) &  91.86 (0.05) &    +86.49 (0.63) &    +77.79 (1.19) \\
    {wine} &  89.15 (0.27) &  88.59 (0.10) &    +71.42 (1.14) &    +96.77 (0.29)  \\
    {yacht} &  88.71 (0.99) &  89.07 (0.20) &  -45.80 (11.28) &  -18.49 (10.7) \\
    {naval} & 90.58 (0.07) &  90.12 (0.07) &    +87.83 (0.48) &    +16.87 (2.95) \\
    {energy} & 89.77 (0.18) &  90.42 (0.09) &    +25.04 (2.70) &    +76.57 (2.12) \\
    {boston} & 90.49 (0.18) &  91.97 (0.15) &    +75.97 (0.80) &    +85.67 (1.54)  \\
    {kin8nm} & 90.40 (0.07) &  89.94 (0.05) &    +89.16 (0.30) &    +99.91 (0.01) \\ 
    {protein} &89.87 (0.02) &  89.45 (0.03) &    +80.29 (0.58) &    +99.91 (0.01)  \\ \bottomrule
\end{tabular}}
\vspace{-0.25cm}
\end{table}

\section{Conclusion}
In this work we have introduced \textit{Relaxed Quantile Regression}, a direct alternative to quantile regression that circumvents the requirement to prespecify the exact quantiles being learned whilst maintaining its attractive coverage properties. We then demonstrated that this new loss can be easily combined with user-specified regularization terms to obtain a solution suitable for a given application (i.e. narrower intervals or improved conditional coverage). Finally, we evaluated the method against state-of-the-art single-model methods across standard benchmark tasks demonstrating that this added flexibility is converted into improved performance in terms of either narrower intervals or better conditional coverage, depending on the practitioner's preference. 

For future work, devising new approaches to evaluating predictive intervals and developing complimentary regularization terms such that they can be paired with RQR provides an opportunity to extend upon this work. Additionally, applying a post hoc conformalization procedure to an underlying quantile regression predictor has already been demonstrated to be highly effective (e.g. \mbox{\citet{romano2019conformalized,sesia2021conformal}}). Although the choice of base estimator is acknowledged to impact the performance of the overall procedure \mbox{\cite{sesia2021conformal}}, to the best of our knowledge, no work has systematically investigated the interaction between the two\footnote{One work that is relevant to this direction is \mbox{\citet{sesia2020comparison}} which does consider different conformalization procedures with a fixed quantile regression method but not vice versa.} -- thus providing a valuable direction for future research.

\clearpage

\subsection*{Impact Statement}
This paper presents work whose goal is to advance the field of Machine Learning. There are many potential societal consequences of our work, none of which we feel must be specifically highlighted here.

\subsection*{Acknowledgments}
We would like to thank Fergus Imrie, Julianna Piskorz, Johannes Vallikivi, Alicia Curth, and the anonymous reviewers for very insightful comments and discussions on earlier drafts of this paper. TP would like to thank AstraZeneca for their sponsorship and support. AJ and NS gratefully acknowledge funding from the Cystic Fibrosis Trust. This work was supported by Azure sponsorship credits granted by Microsoft’s AI for Good Research Lab.

\bibliography{bibliography}

\begin{thebibliography}{88}
\providecommand{\natexlab}[1]{#1}
\providecommand{\url}[1]{\texttt{#1}}
\expandafter\ifx\csname urlstyle\endcsname\relax
  \providecommand{\doi}[1]{doi: #1}\else
  \providecommand{\doi}{doi: \begingroup \urlstyle{rm}\Url}\fi

\bibitem[Abdullah et~al.(2022)Abdullah, Hassan, and Mustafa]{abdullah2022review}
Abdullah, A.~A., Hassan, M.~M., and Mustafa, Y.~T.
\newblock A review on bayesian deep learning in healthcare: Applications and challenges.
\newblock \emph{IEEE Access}, 10:\penalty0 36538--36562, 2022.

\bibitem[Abe et~al.(2022)Abe, Buchanan, Pleiss, Zemel, and Cunningham]{abe2022deep}
Abe, T., Buchanan, E.~K., Pleiss, G., Zemel, R., and Cunningham, J.~P.
\newblock Deep ensembles work, but are they necessary?
\newblock \emph{Advances in Neural Information Processing Systems}, 35:\penalty0 33646--33660, 2022.

\bibitem[Abe et~al.(2023)Abe, Buchanan, Pleiss, and Cunningham]{abe2023pathologies}
Abe, T., Buchanan, E.~K., Pleiss, G., and Cunningham, J.~P.
\newblock Pathologies of predictive diversity in deep ensembles.
\newblock \emph{arXiv preprint arXiv:2302.00704}, 2023.

\bibitem[Amodei et~al.(2016)Amodei, Olah, Steinhardt, Christiano, Schulman, and Man{\'e}]{amodei2016concrete}
Amodei, D., Olah, C., Steinhardt, J., Christiano, P., Schulman, J., and Man{\'e}, D.
\newblock Concrete problems in ai safety.
\newblock \emph{arXiv preprint arXiv:1606.06565}, 2016.

\bibitem[Asuncion \& Newman(2007)Asuncion and Newman]{asuncion2007uci}
Asuncion, A. and Newman, D.
\newblock Uci machine learning repository, 2007.

\bibitem[Ayhan \& Berens(2022)Ayhan and Berens]{ayhan2022test}
Ayhan, M.~S. and Berens, P.
\newblock Test-time data augmentation for estimation of heteroscedastic aleatoric uncertainty in deep neural networks.
\newblock In \emph{Medical Imaging with Deep Learning}, 2022.

\bibitem[Baek \& Kr{\"o}ger(2023)Baek and Kr{\"o}ger]{baek2023safety}
Baek, W.-J. and Kr{\"o}ger, T.
\newblock Safety evaluation of robot systems via uncertainty quantification.
\newblock \emph{arXiv preprint arXiv:2302.10644}, 2023.

\bibitem[Begoli et~al.(2019)Begoli, Bhattacharya, and Kusnezov]{begoli2019need}
Begoli, E., Bhattacharya, T., and Kusnezov, D.
\newblock The need for uncertainty quantification in machine-assisted medical decision making.
\newblock \emph{Nature Machine Intelligence}, 1\penalty0 (1):\penalty0 20--23, 2019.

\bibitem[Brando et~al.(2022)Brando, Center, Rodriguez-Serrano, Vitri{\`a}, et~al.]{brando2022deep}
Brando, A., Center, B.~S., Rodriguez-Serrano, J., Vitri{\`a}, J., et~al.
\newblock Deep non-crossing quantiles through the partial derivative.
\newblock In \emph{International Conference on Artificial Intelligence and Statistics}, pp.\  7902--7914. PMLR, 2022.

\bibitem[Caflisch(1998)]{caflisch1998monte}
Caflisch, R.~E.
\newblock Monte carlo and quasi-monte carlo methods.
\newblock \emph{Acta numerica}, 7:\penalty0 1--49, 1998.

\bibitem[Chung et~al.(2021)Chung, Neiswanger, Char, and Schneider]{chung2021beyond}
Chung, Y., Neiswanger, W., Char, I., and Schneider, J.
\newblock Beyond pinball loss: Quantile methods for calibrated uncertainty quantification.
\newblock \emph{Advances in Neural Information Processing Systems}, 34:\penalty0 10971--10984, 2021.

\bibitem[Daxberger et~al.(2021)Daxberger, Kristiadi, Immer, Eschenhagen, Bauer, and Hennig]{daxberger2021laplace}
Daxberger, E., Kristiadi, A., Immer, A., Eschenhagen, R., Bauer, M., and Hennig, P.
\newblock Laplace redux-effortless bayesian deep learning.
\newblock \emph{Advances in Neural Information Processing Systems}, 34:\penalty0 20089--20103, 2021.

\bibitem[Dewolf et~al.(2023)Dewolf, Baets, and Waegeman]{dewolf2023valid}
Dewolf, N., Baets, B.~D., and Waegeman, W.
\newblock Valid prediction intervals for regression problems.
\newblock \emph{Artificial Intelligence Review}, 56\penalty0 (1):\penalty0 577--613, 2023.

\bibitem[Dietterich(2017)]{dietterich2017steps}
Dietterich, T.~G.
\newblock Steps toward robust artificial intelligence.
\newblock \emph{Ai Magazine}, 38\penalty0 (3):\penalty0 3--24, 2017.

\bibitem[Dunsmore(1968)]{dunsmore1968bayesian}
Dunsmore, I.
\newblock A bayesian approach to calibration.
\newblock \emph{Journal of the Royal Statistical Society: Series B (Methodological)}, 30\penalty0 (2):\penalty0 396--405, 1968.

\bibitem[Feldman et~al.(2021)Feldman, Bates, and Romano]{feldman2021improving}
Feldman, S., Bates, S., and Romano, Y.
\newblock Improving conditional coverage via orthogonal quantile regression.
\newblock \emph{Advances in neural information processing systems}, 34:\penalty0 2060--2071, 2021.

\bibitem[Fitzenberger et~al.(2001)Fitzenberger, Koenker, and Machado]{fitzenberger2001economic}
Fitzenberger, B., Koenker, R., and Machado, J.~A.
\newblock \emph{Economic applications of quantile regression}.
\newblock Springer Science \& Business Media, 2001.

\bibitem[Fort et~al.(2019)Fort, Hu, and Lakshminarayanan]{fort2019deep}
Fort, S., Hu, H., and Lakshminarayanan, B.
\newblock Deep ensembles: A loss landscape perspective.
\newblock \emph{arXiv preprint arXiv:1912.02757}, 2019.

\bibitem[Gal \& Ghahramani(2016)Gal and Ghahramani]{gal2016dropout}
Gal, Y. and Ghahramani, Z.
\newblock Dropout as a bayesian approximation: Representing model uncertainty in deep learning.
\newblock In \emph{international conference on machine learning}, pp.\  1050--1059. PMLR, 2016.

\bibitem[Gasthaus et~al.(2019)Gasthaus, Benidis, Wang, Rangapuram, Salinas, Flunkert, and Januschowski]{gasthaus2019probabilistic}
Gasthaus, J., Benidis, K., Wang, Y., Rangapuram, S.~S., Salinas, D., Flunkert, V., and Januschowski, T.
\newblock Probabilistic forecasting with spline quantile function rnns.
\newblock In \emph{The 22nd international conference on artificial intelligence and statistics}, pp.\  1901--1910. PMLR, 2019.

\bibitem[Gawlikowski et~al.(2023)Gawlikowski, Tassi, Ali, Lee, Humt, Feng, Kruspe, Triebel, Jung, Roscher, et~al.]{gawlikowski2023survey}
Gawlikowski, J., Tassi, C. R.~N., Ali, M., Lee, J., Humt, M., Feng, J., Kruspe, A., Triebel, R., Jung, P., Roscher, R., et~al.
\newblock A survey of uncertainty in deep neural networks.
\newblock \emph{Artificial Intelligence Review}, pp.\  1--77, 2023.

\bibitem[Ghahramani(1996)]{ghahramani1996kin}
Ghahramani, Z.
\newblock The kin datasets, 1996.

\bibitem[Gneiting et~al.(2007)Gneiting, Balabdaoui, and Raftery]{gneiting2007probabilistic}
Gneiting, T., Balabdaoui, F., and Raftery, A.~E.
\newblock Probabilistic forecasts, calibration and sharpness.
\newblock \emph{Journal of the Royal Statistical Society Series B: Statistical Methodology}, 69\penalty0 (2):\penalty0 243--268, 2007.

\bibitem[Goodfellow et~al.(2016)Goodfellow, Bengio, and Courville]{goodfellow2016deep}
Goodfellow, I., Bengio, Y., and Courville, A.
\newblock \emph{Deep learning}.
\newblock MIT press, 2016.

\bibitem[Goodwin et~al.(2010)Goodwin, {\"O}nkal, and Thomson]{goodwin2010forecasts}
Goodwin, P., {\"O}nkal, D., and Thomson, M.
\newblock Do forecasts expressed as prediction intervals improve production planning decisions?
\newblock \emph{European Journal of Operational Research}, 205\penalty0 (1):\penalty0 195--201, 2010.

\bibitem[Graves(2011)]{graves2011practical}
Graves, A.
\newblock Practical variational inference for neural networks.
\newblock \emph{Advances in neural information processing systems}, 24, 2011.

\bibitem[Greenfeld \& Shalit(2020)Greenfeld and Shalit]{greenfeld2020robust}
Greenfeld, D. and Shalit, U.
\newblock Robust learning with the hilbert-schmidt independence criterion.
\newblock In \emph{International Conference on Machine Learning}, pp.\  3759--3768. PMLR, 2020.

\bibitem[Grinsztajn et~al.(2022)Grinsztajn, Oyallon, and Varoquaux]{grinsztajn2022tree}
Grinsztajn, L., Oyallon, E., and Varoquaux, G.
\newblock Why do tree-based models still outperform deep learning on typical tabular data?
\newblock \emph{Advances in neural information processing systems}, 35:\penalty0 507--520, 2022.

\bibitem[Guo et~al.(2017)Guo, Pleiss, Sun, and Weinberger]{guo2017calibration}
Guo, C., Pleiss, G., Sun, Y., and Weinberger, K.~Q.
\newblock On calibration of modern neural networks.
\newblock In \emph{International conference on machine learning}, pp.\  1321--1330. PMLR, 2017.

\bibitem[Hekler et~al.(2023)Hekler, Brinker, and Buettner]{hekler2023test}
Hekler, A., Brinker, T.~J., and Buettner, F.
\newblock Test time augmentation meets post-hoc calibration: uncertainty quantification under real-world conditions.
\newblock In \emph{Proceedings of the AAAI Conference on Artificial Intelligence}, volume~37, pp.\  14856--14864, 2023.

\bibitem[Hinton \& Van~Camp(1993)Hinton and Van~Camp]{hinton1993keeping}
Hinton, G.~E. and Van~Camp, D.
\newblock Keeping the neural networks simple by minimizing the description length of the weights.
\newblock In \emph{Proceedings of the sixth annual conference on Computational learning theory}, pp.\  5--13, 1993.

\bibitem[H{\"u}llermeier \& Waegeman(2021)H{\"u}llermeier and Waegeman]{hullermeier2021aleatoric}
H{\"u}llermeier, E. and Waegeman, W.
\newblock Aleatoric and epistemic uncertainty in machine learning: An introduction to concepts and methods.
\newblock \emph{Machine Learning}, 110:\penalty0 457--506, 2021.

\bibitem[Hunter \& Lange(2000)Hunter and Lange]{hunter2000quantile}
Hunter, D.~R. and Lange, K.
\newblock Quantile regression via an mm algorithm.
\newblock \emph{Journal of Computational and Graphical Statistics}, 9\penalty0 (1):\penalty0 60--77, 2000.

\bibitem[Hwang \& Ding(1997)Hwang and Ding]{hwang1997prediction}
Hwang, J.~G. and Ding, A.~A.
\newblock Prediction intervals for artificial neural networks.
\newblock \emph{Journal of the American Statistical Association}, 92\penalty0 (438):\penalty0 748--757, 1997.

\bibitem[Jeffares et~al.(2023)Jeffares, Liu, Crabb{\'e}, and van~der Schaar]{jeffares2023joint}
Jeffares, A., Liu, T., Crabb{\'e}, J., and van~der Schaar, M.
\newblock Joint training of deep ensembles fails due to learner collusion.
\newblock \emph{arXiv preprint arXiv:2301.11323}, 2023.

\bibitem[Kendall \& Gal(2017)Kendall and Gal]{kendall2017uncertainties}
Kendall, A. and Gal, Y.
\newblock What uncertainties do we need in bayesian deep learning for computer vision?
\newblock \emph{Advances in neural information processing systems}, 30, 2017.

\bibitem[Khosravi et~al.(2010)Khosravi, Nahavandi, Creighton, and Atiya]{khosravi2010lower}
Khosravi, A., Nahavandi, S., Creighton, D., and Atiya, A.~F.
\newblock Lower upper bound estimation method for construction of neural network-based prediction intervals.
\newblock \emph{IEEE transactions on neural networks}, 22\penalty0 (3):\penalty0 337--346, 2010.

\bibitem[Khosravi et~al.(2011)Khosravi, Nahavandi, Creighton, and Atiya]{khosravi2011comprehensive}
Khosravi, A., Nahavandi, S., Creighton, D., and Atiya, A.~F.
\newblock Comprehensive review of neural network-based prediction intervals and new advances.
\newblock \emph{IEEE Transactions on neural networks}, 22\penalty0 (9):\penalty0 1341--1356, 2011.

\bibitem[Kingma \& Ba(2014)Kingma and Ba]{kingma2014adam}
Kingma, D.~P. and Ba, J.
\newblock Adam: A method for stochastic optimization.
\newblock \emph{arXiv preprint arXiv:1412.6980}, 2014.

\bibitem[Koenker(2017)]{koenker2017quantile}
Koenker, R.
\newblock Quantile regression: 40 years on.
\newblock \emph{Annual review of economics}, 9:\penalty0 155--176, 2017.

\bibitem[Koenker \& Bassett~Jr(1978)Koenker and Bassett~Jr]{koenker1978regression}
Koenker, R. and Bassett~Jr, G.
\newblock Regression quantiles.
\newblock \emph{Econometrica: journal of the Econometric Society}, pp.\  33--50, 1978.

\bibitem[Koenker \& Hallock(2001)Koenker and Hallock]{koenker2001quantile}
Koenker, R. and Hallock, K.~F.
\newblock Quantile regression.
\newblock \emph{Journal of economic perspectives}, 15\penalty0 (4):\penalty0 143--156, 2001.

\bibitem[Kompa et~al.(2021)Kompa, Snoek, and Beam]{kompa2021second}
Kompa, B., Snoek, J., and Beam, A.~L.
\newblock Second opinion needed: communicating uncertainty in medical machine learning.
\newblock \emph{NPJ Digital Medicine}, 4\penalty0 (1):\penalty0 4, 2021.

\bibitem[Kuleshov et~al.(2018)Kuleshov, Fenner, and Ermon]{kuleshov2018accurate}
Kuleshov, V., Fenner, N., and Ermon, S.
\newblock Accurate uncertainties for deep learning using calibrated regression.
\newblock In \emph{International conference on machine learning}, pp.\  2796--2804. PMLR, 2018.

\bibitem[Lakshminarayanan et~al.(2017)Lakshminarayanan, Pritzel, and Blundell]{lakshminarayanan2017simple}
Lakshminarayanan, B., Pritzel, A., and Blundell, C.
\newblock Simple and scalable predictive uncertainty estimation using deep ensembles.
\newblock \emph{Advances in neural information processing systems}, 30, 2017.

\bibitem[Meinshausen \& Ridgeway(2006)Meinshausen and Ridgeway]{meinshausen2006quantile}
Meinshausen, N. and Ridgeway, G.
\newblock Quantile regression forests.
\newblock \emph{Journal of machine learning research}, 7\penalty0 (6), 2006.

\bibitem[Nix \& Weigend(1994)Nix and Weigend]{nix1994estimating}
Nix, D.~A. and Weigend, A.~S.
\newblock Estimating the mean and variance of the target probability distribution.
\newblock In \emph{Proceedings of 1994 ieee international conference on neural networks (ICNN'94)}, volume~1, pp.\  55--60. IEEE, 1994.

\bibitem[Osawa et~al.(2019)Osawa, Swaroop, Khan, Jain, Eschenhagen, Turner, and Yokota]{osawa2019practical}
Osawa, K., Swaroop, S., Khan, M. E.~E., Jain, A., Eschenhagen, R., Turner, R.~E., and Yokota, R.
\newblock Practical deep learning with bayesian principles.
\newblock \emph{Advances in neural information processing systems}, 32, 2019.

\bibitem[Papadopoulos(2008)]{papadopoulos2008inductive}
Papadopoulos, H.
\newblock Inductive conformal prediction: Theory and application to neural networks.
\newblock In \emph{Tools in artificial intelligence}. Citeseer, 2008.

\bibitem[Park et~al.(2022)Park, Maddix, Aubet, Kan, Gasthaus, and Wang]{park2022learning}
Park, Y., Maddix, D., Aubet, F.-X., Kan, K., Gasthaus, J., and Wang, Y.
\newblock Learning quantile functions without quantile crossing for distribution-free time series forecasting.
\newblock In \emph{International Conference on Artificial Intelligence and Statistics}, pp.\  8127--8150. PMLR, 2022.

\bibitem[Pearce et~al.(2018)Pearce, Brintrup, Zaki, and Neely]{pearce2018high}
Pearce, T., Brintrup, A., Zaki, M., and Neely, A.
\newblock High-quality prediction intervals for deep learning: A distribution-free, ensembled approach.
\newblock In \emph{International conference on machine learning}, pp.\  4075--4084. PMLR, 2018.

\bibitem[Pereyra et~al.(2017)Pereyra, Tucker, Chorowski, Kaiser, and Hinton]{pereyra2017regularizing}
Pereyra, G., Tucker, G., Chorowski, J., Kaiser, L., and Hinton, G.
\newblock Regularizing neural networks by penalizing confident output distributions, 2017.
\newblock URL \url{https://openreview.net/forum?id=HkCjNI5ex}.

\bibitem[Platt et~al.(1999)]{platt1999probabilistic}
Platt, J. et~al.
\newblock Probabilistic outputs for support vector machines and comparisons to regularized likelihood methods.
\newblock \emph{Advances in large margin classifiers}, 10\penalty0 (3):\penalty0 61--74, 1999.

\bibitem[Rahaman et~al.(2021)]{rahaman2021uncertainty}
Rahaman, R. et~al.
\newblock Uncertainty quantification and deep ensembles.
\newblock \emph{Advances in Neural Information Processing Systems}, 34:\penalty0 20063--20075, 2021.

\bibitem[Rasmussen et~al.(2006)Rasmussen, Williams, et~al.]{rasmussen2006gaussian}
Rasmussen, C.~E., Williams, C.~K., et~al.
\newblock \emph{Gaussian processes for machine learning}, volume~1.
\newblock Springer, 2006.

\bibitem[Romano et~al.(2019)Romano, Patterson, and Candes]{romano2019conformalized}
Romano, Y., Patterson, E., and Candes, E.
\newblock Conformalized quantile regression.
\newblock \emph{Advances in neural information processing systems}, 32, 2019.

\bibitem[Sagi \& Rokach(2018)Sagi and Rokach]{sagi2018ensemble}
Sagi, O. and Rokach, L.
\newblock Ensemble learning: A survey.
\newblock \emph{Wiley Interdisciplinary Reviews: Data Mining and Knowledge Discovery}, 8\penalty0 (4):\penalty0 e1249, 2018.

\bibitem[Savelli \& Joslyn(2013)Savelli and Joslyn]{savelli2013advantages}
Savelli, S. and Joslyn, S.
\newblock The advantages of predictive interval forecasts for non-expert users and the impact of visualizations.
\newblock \emph{Applied Cognitive Psychology}, 27\penalty0 (4):\penalty0 527--541, 2013.

\bibitem[Seber \& Lee(2003)Seber and Lee]{seber2003linear}
Seber, G.~A. and Lee, A.~J.
\newblock \emph{Linear regression analysis}, volume 330.
\newblock John Wiley \& Sons, 2003.

\bibitem[Seedat et~al.(2023)Seedat, Jeffares, Imrie, and van~der Schaar]{seedat2023improving}
Seedat, N., Jeffares, A., Imrie, F., and van~der Schaar, M.
\newblock Improving adaptive conformal prediction using self-supervised learning.
\newblock In \emph{International Conference on Artificial Intelligence and Statistics}, pp.\  10160--10177. PMLR, 2023.

\bibitem[Sesia \& Cand{\`e}s(2020)Sesia and Cand{\`e}s]{sesia2020comparison}
Sesia, M. and Cand{\`e}s, E.~J.
\newblock A comparison of some conformal quantile regression methods.
\newblock \emph{Stat}, 9\penalty0 (1):\penalty0 e261, 2020.

\bibitem[Sesia \& Romano(2021)Sesia and Romano]{sesia2021conformal}
Sesia, M. and Romano, Y.
\newblock Conformal prediction using conditional histograms.
\newblock \emph{Advances in Neural Information Processing Systems}, 34:\penalty0 6304--6315, 2021.

\bibitem[Skafte et~al.(2019)Skafte, J{\o}rgensen, and Hauberg]{skafte2019reliable}
Skafte, N., J{\o}rgensen, M., and Hauberg, S.
\newblock Reliable training and estimation of variance networks.
\newblock \emph{Advances in Neural Information Processing Systems}, 32, 2019.

\bibitem[Steinwart \& Christmann(2011)Steinwart and Christmann]{steinwart2011estimating}
Steinwart, I. and Christmann, A.
\newblock Estimating conditional quantiles with the help of the pinball loss.
\newblock 2011.

\bibitem[Su et~al.(2023)Su, Li, He, Han, Feng, Ding, and Miao]{su2023uncertainty}
Su, S., Li, Y., He, S., Han, S., Feng, C., Ding, C., and Miao, F.
\newblock Uncertainty quantification of collaborative detection for self-driving.
\newblock In \emph{2023 IEEE International Conference on Robotics and Automation (ICRA)}, pp.\  5588--5594. IEEE, 2023.

\bibitem[Szegedy et~al.(2016)Szegedy, Vanhoucke, Ioffe, Shlens, and Wojna]{szegedy2016rethinking}
Szegedy, C., Vanhoucke, V., Ioffe, S., Shlens, J., and Wojna, Z.
\newblock Rethinking the inception architecture for computer vision.
\newblock In \emph{Proceedings of the IEEE conference on computer vision and pattern recognition}, pp.\  2818--2826, 2016.

\bibitem[Tagasovska \& Lopez-Paz(2019)Tagasovska and Lopez-Paz]{tagasovska2019single}
Tagasovska, N. and Lopez-Paz, D.
\newblock Single-model uncertainties for deep learning.
\newblock \emph{Advances in Neural Information Processing Systems}, 32, 2019.

\bibitem[Takeuchi et~al.(2006)Takeuchi, Le, Sears, Smola, et~al.]{takeuchi2006nonparametric}
Takeuchi, I., Le, Q., Sears, T., Smola, A., et~al.
\newblock Nonparametric quantile estimation.
\newblock 2006.

\bibitem[Tierney \& Kadane(1986)Tierney and Kadane]{tierney1986accurate}
Tierney, L. and Kadane, J.~B.
\newblock Accurate approximations for posterior moments and marginal densities.
\newblock \emph{Journal of the american statistical association}, 81\penalty0 (393):\penalty0 82--86, 1986.

\bibitem[Tomani et~al.(2021)Tomani, Gruber, Erdem, Cremers, and Buettner]{tomani2021post}
Tomani, C., Gruber, S., Erdem, M.~E., Cremers, D., and Buettner, F.
\newblock Post-hoc uncertainty calibration for domain drift scenarios.
\newblock In \emph{Proceedings of the IEEE/CVF Conference on Computer Vision and Pattern Recognition}, pp.\  10124--10132, 2021.

\bibitem[Vapnik(2006)]{vapnik2006estimation}
Vapnik, V.
\newblock \emph{Estimation of dependences based on empirical data}.
\newblock Springer Science \& Business Media, 2006.

\bibitem[Vovk et~al.(2005)Vovk, Gammerman, and Shafer]{vovk2005algorithmic}
Vovk, V., Gammerman, A., and Shafer, G.
\newblock \emph{Algorithmic learning in a random world}, volume~29.
\newblock Springer, 2005.

\bibitem[Wang et~al.(2022)Wang, Xu, Zou, Zhang, and Zhang]{wang2022deep}
Wang, Y., Xu, H., Zou, R., Zhang, L., and Zhang, F.
\newblock A deep asymmetric laplace neural network for deterministic and probabilistic wind power forecasting.
\newblock \emph{Renewable Energy}, 196:\penalty0 497--517, 2022.

\bibitem[Wen et~al.(2019)Wen, Tran, and Ba]{wen2019batchensemble}
Wen, Y., Tran, D., and Ba, J.
\newblock Batchensemble: an alternative approach to efficient ensemble and lifelong learning.
\newblock In \emph{International Conference on Learning Representations}, 2019.

\bibitem[Wilson(2020)]{wilson2020case}
Wilson, A.~G.
\newblock The case for bayesian deep learning.
\newblock \emph{arXiv preprint arXiv:2001.10995}, 2020.

\bibitem[Wilson \& Izmailov(2020)Wilson and Izmailov]{wilson2020bayesian}
Wilson, A.~G. and Izmailov, P.
\newblock Bayesian deep learning and a probabilistic perspective of generalization.
\newblock \emph{Advances in neural information processing systems}, 33:\penalty0 4697--4708, 2020.

\bibitem[Wilson et~al.(2016{\natexlab{a}})Wilson, Hu, Salakhutdinov, and Xing]{wilson2016deep}
Wilson, A.~G., Hu, Z., Salakhutdinov, R., and Xing, E.~P.
\newblock Deep kernel learning.
\newblock In \emph{Artificial intelligence and statistics}, pp.\  370--378. PMLR, 2016{\natexlab{a}}.

\bibitem[Wilson et~al.(2016{\natexlab{b}})Wilson, Hu, Salakhutdinov, and Xing]{wilson2016stochastic}
Wilson, A.~G., Hu, Z., Salakhutdinov, R.~R., and Xing, E.~P.
\newblock Stochastic variational deep kernel learning.
\newblock \emph{Advances in neural information processing systems}, 29, 2016{\natexlab{b}}.

\bibitem[Winkler(1972)]{winkler1972decision}
Winkler, R.~L.
\newblock A decision-theoretic approach to interval estimation.
\newblock \emph{Journal of the American Statistical Association}, 67\penalty0 (337):\penalty0 187--191, 1972.

\bibitem[Xie et~al.(2016)Xie, Wang, Wei, Wang, and Tian]{xie2016disturblabel}
Xie, L., Wang, J., Wei, Z., Wang, M., and Tian, Q.
\newblock Disturblabel: Regularizing cnn on the loss layer.
\newblock In \emph{Proceedings of the IEEE Conference on Computer Vision and Pattern Recognition}, pp.\  4753--4762, 2016.

\bibitem[Yao et~al.(2019)Yao, Pan, Ghosh, and Doshi-Velez]{yao2019quality}
Yao, J., Pan, W., Ghosh, S., and Doshi-Velez, F.
\newblock Quality of uncertainty quantification for bayesian neural network inference.
\newblock \emph{arXiv preprint arXiv:1906.09686}, 2019.

\bibitem[Yu \& Jones(1998)Yu and Jones]{yu1998local}
Yu, K. and Jones, M.
\newblock Local linear quantile regression.
\newblock \emph{Journal of the American statistical Association}, 93\penalty0 (441):\penalty0 228--237, 1998.

\bibitem[Yu et~al.(2003)Yu, Lu, and Stander]{yu2003quantile}
Yu, K., Lu, Z., and Stander, J.
\newblock Quantile regression: applications and current research areas.
\newblock \emph{Journal of the Royal Statistical Society Series D: The Statistician}, 52\penalty0 (3):\penalty0 331--350, 2003.

\bibitem[Zadrozny \& Elkan(2001)Zadrozny and Elkan]{zadrozny2001obtaining}
Zadrozny, B. and Elkan, C.
\newblock Obtaining calibrated probability estimates from decision trees and naive bayesian classifiers.
\newblock In \emph{Icml}, volume~1, pp.\  609--616, 2001.

\bibitem[Zadrozny \& Elkan(2002)Zadrozny and Elkan]{zadrozny2002transforming}
Zadrozny, B. and Elkan, C.
\newblock Transforming classifier scores into accurate multiclass probability estimates.
\newblock In \emph{Proceedings of the eighth ACM SIGKDD international conference on Knowledge discovery and data mining}, pp.\  694--699, 2002.

\bibitem[Zhang et~al.(2018)Zhang, B{\"u}tepage, Kjellstr{\"o}m, and Mandt]{zhang2018advances}
Zhang, C., B{\"u}tepage, J., Kjellstr{\"o}m, H., and Mandt, S.
\newblock Advances in variational inference.
\newblock \emph{IEEE transactions on pattern analysis and machine intelligence}, 41\penalty0 (8):\penalty0 2008--2026, 2018.

\bibitem[Zhang et~al.(2023)Zhang, Wen, and Wu]{zhang2023contextual}
Zhang, Y., Wen, H., and Wu, Q.
\newblock A contextual bandit approach for value-oriented prediction interval forecasting.
\newblock \emph{IEEE Transactions on Smart Grid}, 2023.

\bibitem[Zhao et~al.(2020)Zhao, Ma, and Ermon]{zhao2020individual}
Zhao, S., Ma, T., and Ermon, S.
\newblock Individual calibration with randomized forecasting.
\newblock In \emph{International Conference on Machine Learning}, pp.\  11387--11397. PMLR, 2020.

\end{thebibliography}
\bibliographystyle{icml2024}

%%%%%%%%%%%%%%%%%%%%%%%%%%%%%%%%%%%%%%%%%%%%%%%%%%%%%%%%%%%%%%%%%%%%%%%%%%%%%%%
%%%%%%%%%%%%%%%%%%%%%%%%%%%%%%%%%%%%%%%%%%%%%%%%%%%%%%%%%%%%%%%%%%%%%%%%%%%%%%%
% APPENDIX
%%%%%%%%%%%%%%%%%%%%%%%%%%%%%%%%%%%%%%%%%%%%%%%%%%%%%%%%%%%%%%%%%%%%%%%%%%%%%%%
%%%%%%%%%%%%%%%%%%%%%%%%%%%%%%%%%%%%%%%%%%%%%%%%%%%%%%%%%%%%%%%%%%%%%%%%%%%%%%%
\newpage
\appendix
\onecolumn

\section{Extended Related Work} \label{app:extendedrelatedwork}
In this work, we have focused on \textit{single model approaches} (i.e. quantile/interval regression) for obtaining predictive intervals from neural networks with the intention of developing more effective methods within this category. Due to the vital importance of uncertainty quantification, a vast literature of disparate alternative approaches has emerged, each representing interesting research directions with their own respective strengths and weaknesses. In this section, we provide a broad overview of the leading methods for obtaining predictive intervals from neural networks. Given the extensive nature of this topic, this summary is not exhaustive and we refer the reader to the more comprehensive references cited within each topic for a more complete overview. For a recent survey on predictive intervals in regression problems more generally (i.e. beyond neural networks) we refer the reader to \citet{dewolf2023valid} or for general neural network uncertainty quantification to \citet{gawlikowski2023survey}.

\textbf{Single model approaches.} The subject of this work is individual models that output intervals rather than point estimates which we refer to as single model approaches. An in-depth description of the state-of-the-art methods within this category is provided in \Cref{sec:background}. Whilst these methods typically estimate aleatoric uncertainty, it is quite straightforward to combine them with Bayesian or ensemble methods to simultaneously account for epistemic uncertainty \citep{tagasovska2019single}. As previously discussed, quantile regression \citep{koenker1978regression} has excelled in this category -- including prior to and outside of the deep learning regime \citep{koenker2001quantile, meinshausen2006quantile, yu1998local}. Similarly, the evaluated direct interval estimation method of \citet{pearce2018high} was also an evolution of the foundational work of \citet{khosravi2010lower}. \citet{chung2021beyond} primarily focuses on the task of training models that output the full predictive distribution (rather than intervals for a preselected coverage level) which suffers from the same limitations as SQR that were discussed in the main text. Elsewhere in the time series setting, \citet{gasthaus2019probabilistic} estimate quantiles using monotonic regression splines. Whilst strictly speaking a dual model approach, variance networks fit a second neural network to estimate the variance of a prediction \citep{skafte2019reliable}. A very simple approach that can be easily combined with most other approaches is to regularize the neural network such that measures of calibration are optimized using e.g. confident output penalization \citep{pereyra2017regularizing}, label smoothing \citep{szegedy2016rethinking}, or induced label noise \citep{xie2016disturblabel}.

\textbf{Bayesian methods.} Bayesian neural networks attempt to model the target probability distribution for a given test example $x$ given some observed data $\mathcal{D}$ by marginalizing over a distribution of network parameters $\theta$ such that $P(y|x,\mathcal{D}) = \int P(y|x, \theta) P(\theta|\mathcal{D})$ from which predictive intervals can be derived. This expression requires intractable calculations which may be approximated in practice using various techniques. A simple approach consists of taking a second-order Taylor expansion around the maximum a posteriori estimate of $\theta$ to produce a Gaussian approximation of $P(\theta| \mathcal{D})$ known as the \textit{Laplace approximation} \citep{tierney1986accurate}. In practice, further scaling efforts have been required to apply this method in the modern deep learning context \citep{daxberger2021laplace}. Alternatively, \textit{variational inference} substitutes $P(\theta| \mathcal{D}) \approx q(\theta)$ where $q(\theta)$ denotes some tractable parametric approximation such as a Gaussian distribution \citep{hinton1993keeping}. Significant research has investigated methods for extending this approach to account for modern datasets and architectures \citep{graves2011practical, zhang2018advances}. \textit{Monte Carlo integration} instead approximates the integral over parameters with a finite sum such that $\int P(y|x, \theta) P(\theta|\mathcal{D}) \approx \frac{1}{M}\sum_{j=1}^{M} P(y|x, \theta_j)$ \citep{caflisch1998monte}. Then different choices of selecting a subset of $M$ weight parameterizations result in alternative instantiations of this approximation (see e.g. Ch. 17 of \citet{goodfellow2016deep}). \textit{Monte Carlo dropout}, which drops neurons at test time according to a Bernoulli distribution to estimate uncertainty, has become popular due to its conceptual and implementation simplicity \citep{gal2016dropout}. A somewhat distinct Bayesian approach is the \textit{Gaussian process} which is a collection of random variables of which any finite sample is Gaussian distributed specified by a specific mean function and covariance kernel \citep{rasmussen2006gaussian}. Although this method suffers from important limitations (e.g. scaling), its adaption to the deep learning setting has achieved notable performance on benchmark tasks \citep{wilson2016deep, wilson2016stochastic}.

\textbf{Deep ensembles.} Aggregating outputs over a set of neural networks has emerged as a simple but effective method that accounts for epistemic uncertainty \citep{lakshminarayanan2017simple}. Whilst this approach can be considered studied through a Bayesian perspective \citep{wilson2020bayesian, wilson2020case}, it has primarily developed from the classical ensembling literature \citep{sagi2018ensemble}. It is hypothesised that the empirical success of deep ensembles is due to their better exploration of the loss landscape \citep{fort2019deep} -- with diversity typically achieved through random initialization and batching due to known challenges in optimizing for exploration \citep{jeffares2023joint, abe2023pathologies}. However, recent works have suggested that their improved calibration may be overstated \citep{rahaman2021uncertainty} and that performance increases may be better understood as being due to an increased model capacity \citep{abe2022deep}. These gains also come at the cost of a significant computational overhead with the relative computational cost growing linearly with ensemble size. More efficient approaches to deep ensembling have been proposed in recent years to reduce this cost \citep[e.g.][]{wen2019batchensemble}.

% \textbf{Reinforcement Learning methods.}

\textbf{Parametric approaches.} The classical statistics literature has a long history of developing principled estimates of predictive intervals derived from parametric assumptions in the data generating process \citep[see e.g. Ch. 5.3][]{seber2003linear}. One such approach in the neural network setting is the delta method which makes a linearity assumption in the region around a prediction paired with a Gaussian assumption on the noise distribution \citep{hwang1997prediction, khosravi2011comprehensive}. Another example is \citet{nix1994estimating} who also assume the noise distribution to be Gaussian and derive a cost function to estimate its value with an auxiliary output to the network. 

\textbf{Post-hoc methods.} Several methods exist in which calibrated intervals are constructed or updated as a post-processing step for an existing point predictor or interval estimator respectively. Perhaps the most notable of these is conformal prediction \citep{vovk2005algorithmic} and, in particular, inductive conformal prediction \citep{papadopoulos2008inductive}, which provides prediction intervals with finite sample marginal coverage guarantees. \citet{romano2019conformalized} further developed an approach that also performs well conditionally (i.e. intervals where width is adaptive to a given example). As noted by the authors, this method ``can wrap around any algorithm for quantile regression'', thus making it a complimentary post-hoc approach. Similarly, \mbox{\citet{sesia2021conformal}} use (multiple) quantile regression as a base estimator for conformalization which they report to ``work particularly well''. Other methods typically focus on improving the calibration of an underlying point predictor. Approaches include Platt scaling \citep{platt1999probabilistic}, temperature scaling \citep{tomani2021post}, histogram binning \citep{zadrozny2001obtaining}, test time augmentation \citep{hekler2023test}, and isotonic regression \citep{zadrozny2002transforming}.

\begin{table}
\caption{\textbf{Categories of interval construction approaches.} A broad comparison of different categories of interval construction highlighting the key distinctions of single model approaches. Detailed descriptions and discussion is provided in the text.}
\label{tab:dantablecategories}
\centering
\begin{tabular}{c|ccccc} \toprule
     & \textbf{(1)} & \textbf{(2)} & \textbf{(3)} & \textbf{(4)}\\ \midrule
    \multirow{2}{*}{Single model approaches}  & \multirow{2}{*}{\cmark} & \multirow{2}{*}{\cmark} & \multirow{2}{*}{\cmark} & \multirow{2}{*}{\cmark}  \\
    &&&& \\
    \multirow{2}{*}{Bayesian methods}  & \multirow{2}{*}{\xmark}  & \multirow{2}{*}{\xmark} & \multirow{2}{*}{\cmark}  &  \multirow{2}{*}{\cxmark} \\
    &&&& \\
    \multirow{2}{*}{Deep ensembles}  & \multirow{2}{*}{\xmark}  & \multirow{2}{*}{\xmark} & \multirow{2}{*}{\xmark} & \multirow{2}{*}{\cmark} \\
    &&&& \\
    \multirow{2}{*}{Parametric approaches} & \multirow{2}{*}{\cxmark}  & \multirow{2}{*}{\xmark} & \multirow{2}{*}{\cmark}  &  \multirow{2}{*}{\cxmark}   \\
    &&&& \\
    \multirow{2}{*}{Post-hoc methods} & \multirow{2}{*}{\cmark}  & \multirow{2}{*}{\cmark} & \multirow{2}{*}{\xmark}  & \multirow{2}{*}{\cmark}  \\
    &&&& \\ \bottomrule
\end{tabular}
\end{table}

\textbf{Comparing categories of interval construction.} We now provide some high level distinctions between these categories of approaches for constructing intervals. This is \textit{not} intended as a complete evaluation of competing uncertainty quantification approaches, rather we wish to highlight the advantages of single model approaches to emphasize the significance of developments \textit{within} this category. Furthermore, due to the broadness of these categories and the lack of clear boundaries between them, the following distinctions act as generalizations for which some exceptions exist. A discussion of these distinctions is provided in the next paragraph with a summary provided in \Cref{tab:dantablecategories}.

\textbf{(1)} \textit{Minimal computational overhead} - Deep ensembles require training a neural network from scratch $M$ times while Bayesian methods typically require approximations to produce tractable algorithms for large-scale models \citep{abdullah2022review, osawa2019practical}. Parametric methods require some overhead with the specific amount method dependent. In contrast, the other categories, including single model approaches, generally only require at most a change of loss function at training time. \textbf{(2)} \textit{Data-assumption-free valid intervals} - Parametric approaches and Bayesian methods generally require assumptions on the data-generating process to provide validity guarantees on their intervals. Deep ensembles don't provide such guarantees on derived intervals. However, many single model and post-hoc methods provide asymptotic or even finite sample guarantees of valid intervals. \textbf{(3)} \textit{Distinguishes between aleatoric \& epistemic uncertainty} - Explicitly differentiating between aleatoric (irreducible) and epistemic (reducible) uncertainty can be valuable \citep{hullermeier2021aleatoric}. As discussed in \cite{tagasovska2019single, pearce2018high}, the loss function of single model approaches generally estimates aleatoric uncertainty while epistemic uncertainty can also be accounted for by applying e.g. orthonormal certificates or interval ensembling. Explicitly modeling uncertainties in this way also tends to be at the heart of Bayesian and parametric methods (see e.g. \citet{kendall2017uncertainties}). Deep ensembles and post-hoc methods do not typically make this distinction. \textbf{(4)} \textit{Directly applicable to any loss-based algorithm} - Single-model approaches are typically characterized by simply replacing a point estimate loss function with a quantile regression loss (e.g. mean squared error $\to$ pinball loss). Similarly, deep ensembles only require running an algorithm multiple times and post-hoc methods typically wrap around or recallibrate arbitrary models. Some Bayesian and parametric methods are generally applicable (e.g. Monte Carlo dropout) however others require more substantial changes resulting in different algorithms when applied to neural networks (e.g. a Gaussian process).

% Useful papers:

% Uncertainty Quantification and Deep Ensembles - on limitations of deep ensembles for UQ and compares to data-augmention and post processing calibration.

% Laplace Redux – Effortless Bayesian Deep Learning

% Bayesian Deep Learning and a Probabilistic
% Perspective of Generalization - Links deep ensembles to bayesian framework

% The Case for Bayesian Deep Learning - "The BMA represents epistemic uncertainty". BMA and deep ensembles measure epistemic uncertainty while single model methods typically represent aleatoric uncertainty.

% MC dropout somewhere

% ADD ANOTHER DAN TABLE TO THIS SECTION. Compare on compute, assumptions, types of uncertainty captured, compatibility with SM approaches, significant changes/restrictions to the underlying learning algorithm.

% see fig 5 of "A Survey on Uncertainty Quantification Methods for Deep Neural Networks: An Uncertainty Source’s Perspective" for types of uncertainty \cite{he2023survey}

\section{Theory and Proofs} \label{app:proofs}

\textbf{Equivalence of the Winkler Score and the Pinball Loss.}
\begin{proof}
For a targeted coverage level $1-\alpha$ and the bounds ($\mu_1$,$\mu_2$), we demonstrate that the Winkler score is proportional to the Pinball Loss by a factor $\frac{2}{\alpha}$.
\begin{align*}
    \frac{2}{\alpha}\mathcal{L}^{\text{QR}}_\alpha((\mu_1, \mu_2, \mathbf{x}, y) &= \frac{2\rho_{\frac{\alpha}{2}}(\mu_1(x), \mathbf{x}, y) + 2\rho_{1-\frac{\alpha}{2}}(\mu_2(x), \mathbf{x}, y)}{\alpha} \\
    & = \frac{2(\mathbb{I}_{y\leq \mu_1(x)} - \frac{\alpha}{2})(\mu_1(x) - y) + 2(\mathbb{I}_{y\leq \mu_2(x)} - 1 +\frac{\alpha}{2})(\mu_2(x) - y)}{\alpha} \\
    & = \frac{2(\mathbb{I}_{y\leq \mu_1(x)} - \frac{\alpha}{2})(\mu_1(x) - y) + 2(1 - \mathbb{I}_{y\leq \mu_2(x)}  -\frac{\alpha}{2})(y - \mu_2(x) )}{\alpha} \\
    & = \frac{2(\mathbb{I}_{y\leq \mu_1(x)} - \frac{\alpha}{2})(\mu_1(x) - y) + 2(\mathbb{I}_{y\geq \mu_2(x)}  -\frac{\alpha}{2})(y - \mu_2(x) )}{\alpha} \\
    & =(\mu_2(x) - \mu_1(x)) + \frac{2}{\alpha}\mathbb{I}_{y\leq \mu_1(x)}(q_{\frac{\alpha}{2}} - y)  + \frac{2}{{\alpha}}\mathbb{I}_{y\geq \mu_2(x)}(y - \mu_2(x) ) \\
\end{align*}
We obtain the expression of the Winkler Score \cite{winkler1972decision}.
\end{proof}

\RQRcoverage*
\begin{proof}
In these proofs, we omit explicitly including the input $x$ for clarity. However, the reader should recall that $Y$ is the random variable associated with the input $x$. 

First, we can rewrite our new loss with an indicator function: 
\begin{equation*}
    \mathcal{L}^{\text{RQR}}_{\alpha}((\mu_1,\mu_2), y)) = (y-\mu_1)(y-\mu_2)(\alpha-\mathbb{I}_{y \in [\mu_1,\mu_2]})
\end{equation*}

Then, we consider the expectation of the loss:
\begin{equation*}
\mathbb{E}( \mathcal{L}^{\text{RQR}}_{\alpha}((\mu_1,\mu_2), Y))) = \alpha \int_{-\infty}^{\infty} (y - \mu_1)(y - \mu_2)d\mathbb{P}_Y(y) - \int_{\mu_1}^{\mu_2} (y - \mu_1)(y - \mu_2)d\mathbb{P}_Y(y)
\end{equation*}

We find the expression of its partial derivatives with respect to the bounds:

\begin{equation*}
\begin{split}
\frac{\partial \mathbb{E}(\mathcal{L}^{\text{RQR}}_{\alpha})}{\partial \mu_1} & =  -\alpha \int_{-\infty}^{\infty} (y - \mu_2)d\mathbb{P}_Y(y) + \int_{\mu_1}^{\mu_2} (y - \mu_2)d\mathbb{P}_Y(y) \\
\frac{\partial \mathbb{E}(\mathcal{L}^{\text{RQR}}_{\alpha})}{\partial \mu_2} & =-\alpha \int_{-\infty}^{\infty} (y - \mu_1)d\mathbb{P}_Y(y) + \int_{\mu_1}^{\mu_2} (y - \mu_1)d\mathbb{P}_Y(y)
\end{split}
\end{equation*}

At the minimum, the gradient of the expected loss is null. Thus, if there is a minimum at the point $(\mu_1^*,\mu_2^*)$ with $\mu_2^* > \mu_1^*$,
\begin{equation*}
\begin{split}
& \frac{\partial \mathbb{E}(\mathcal{L}^{\text{RQR}}_{\alpha})}{\partial \mu_1}\Bigr|_{\mu_1^*,\mu_2^*} - \frac{\partial \mathbb{E}(\mathcal{L}^{\text{RQR}}_{\alpha})}{\partial \mu_2}\Bigr|_{\mu_1^*,\mu_2^*} = 0 \\
\implies & -\alpha \int_{-\infty}^{\infty} (\mu_1^* - \mu_2^*)d\mathbb{P}_Y(y) + \int_{\mu_1^*}^{\mu_2^*} (\mu_1^*  - \mu_2^*)d\mathbb{P}_Y(y) = 0\\
\implies & -\alpha (\mu_1^* - \mu_2^*)\int_{-\infty}^{\infty}d\mathbb{P}_Y(y) + (\mu_1^*  - \mu_2^*) \int_{\mu_1^*}^{\mu_2^*} d\mathbb{P}_Y(y) = 0 \\
\implies & -\alpha \int_{-\infty}^{\infty}d\mathbb{P}_Y(y) + \int_{\mu_1^*}^{\mu_2^*} d\mathbb{P}_Y(y) = 0 \\
\implies &   \mathcal{P}(\mu_1^* < Y < \mu_2^*) = \alpha 
\end{split}
\end{equation*}
\end{proof}

% \finiteRQR*

\begin{customthm}{3.2}[RQR with Finite Samples]\label{thm:finiteRQR}
 For any random variable $Y$ associated with an input $x$, we consider N realizations of this random variable : $\{y_i\}_{i=1,N}$. $ \forall \, \alpha \in [0,1]$ such that $\alpha \cdot N \in \mathbb{N},$ 
\begin{multline*}
(\mu_1^*(x),\mu_2^*(x)) = \argmin_{\mu_1,\mu_2}{\{ \sum_{i=1}^N\mathcal{L}^{\textup{RQR}}_{\alpha}((\mu_1,\mu_2), x, y_i)\}}\IfRestatedTF{}{\\} \implies  \frac{1}{N}\sum_{i=1}^N\mathbb{I}_{y_i \in [\mu_1*,\mu_2*]} = \alpha.  \\
\end{multline*}
\end{customthm}

\begin{proof}
We rewrite our sum as an integral with discrete density $\mathbb{I}_{y\in\{y_i\}}dy$ :

\begin{equation*}
    \sum_{i=1}^N\mathcal{L}^{\text{RQR}}_{\alpha}((\mu_1,\mu_2), x, y_i) =  \alpha \int_{-\infty}^{\infty} (y - \mu_1)(y - \mu_2)\mathbb{I}_{y\in\{y_i\}}dy - \int_{\mu_1}^{\mu_2} (y - \mu_1)(y - \mu_2)\mathbb{I}_{y\in\{y_i\}}dy
\end{equation*}

We find the expression of its partial derivatives with respect to the bounds:

\begin{equation*}
\begin{split}
\frac{\partial \sum_{i=1}^N\mathcal{L}^{\text{RQR}}_{\alpha}((\mu_1,\mu_2), x, y_i)}{\partial \mu_1} & =  -\alpha \int_{-\infty}^{\infty} (y - \mu_2)\mathbb{I}_{y\in\{y_i\}}dy + \int_{\mu_1}^{\mu_2} (y - \mu_2)\mathbb{I}_{y\in\{y_i\}}dy \\
\frac{\partial \sum_{i=1}^N\mathcal{L}^{\text{RQR}}_{\alpha}((\mu_1,\mu_2), x, y_i)}{\partial \mu_2} & =-\alpha \int_{-\infty}^{\infty} (y - \mu_1)\mathbb{I}_{y\in\{y_i\}}dy + \int_{\mu_1}^{\mu_2} (y - \mu_1)\mathbb{I}_{y\in\{y_i\}}dy
\end{split}
\end{equation*}

At the minimum, the gradient of the expected loss is null. Thus, if there is a minimum at the point $(\mu_1^*,\mu_2^*)$ with $\mu_2^* > \mu_1^*$,
\begin{equation*}
\begin{split}
& \frac{\partial \sum_{i=1}^N\mathcal{L}^{\text{RQR}}_{\alpha}((\mu_1,\mu_2), x, y_i)}{\partial \mu_1}\Bigr|_{\mu_1^*,\mu_2^*} - \frac{\partial\sum_{i=1}^N\mathcal{L}^{\text{RQR}}_{\alpha}((\mu_1,\mu_2), x, y_i)}{\partial \mu_2}\Bigr|_{\mu_1^*,\mu_2^*} = 0 \\
\implies & -\alpha \int_{-\infty}^{\infty} (\mu_1^* - \mu_2^*)\mathbb{I}_{y\in\{y_i\}}dy + \int_{\mu_1^*}^{\mu_2^*} (\mu_1^*  - \mu_2^*)\mathbb{I}_{y\in\{y_i\}}dy = 0\\
\implies & -\alpha (\mu_1^* - \mu_2^*) \sum_{i=1}^N 1 + (\mu_1^* - \mu_2^*) \sum_{y_i\in [\mu_1^*,\mu_2^*]}1 =0\\
\implies & \frac{1}{N}\sum_{i=1}^N\mathbb{I}_{y_i \in [\mu_1*,\mu_2*]} = \alpha 
\end{split}
\end{equation*}
\end{proof}

% \RQRasymptoticdata*
\begin{customthm}{3.3}[RQR is Unbiased and Consistent]
% \begin{restatable}[RQR is Unbiased and Consistent]{theorem{RQRasymptoticdata}
 For any random variable $Y$ associated with an input $x$, we consider N realizations of this random variable : $\{y_i\}_{i=1}^{N}$. $(\mu_{1,N},\mu_{2,N})$ are the bounds of our estimator trained on these N samples. $ \forall \, \alpha \in [0,1]$, we name $Q_N$ the absolute true miscoverage of our estimator  
 \begin{equation*}
     Q_N = |\int_{\mu_{1,N}}^{\mu_{2,N}}dPy - \alpha|
 \end{equation*}

Then $E[Q_N] = 0$ and the variance of $Q_N$ is bounded by $\frac{1}{4N}$.
% \label{thm:RQRubiased}
\end{customthm}
\begin{proof}
We can decompose $Q_N$ in two terms :

\begin{equation*}
    Q_N \leq |\int_{\mu_{1,N}}^{\mu_{2,N}}dPy - \frac{1}{N}\sum_{i=1}^N \mathbb{I}_{y_i \in [\mu_{1,N},\mu_{2,N}]}| + |\frac{1}{N}\sum_{i=1}^N \mathbb{I}_{y_i \in [\mu_{1,N},\mu_{2,N}]} - \alpha|
\end{equation*}

The theorem \ref{thm:finiteRQR} proves that $\forall N \in \mathbb{N}$ the second term is null.

Then, the first term is simply the difference between an integral and its Monte Carlo estimator. Hence, we can derive well-known results about its mean $\mathbb{E}[Q_N]$ and variance $\mathbb{V}[Q_N]$.

\begin{equation*}
\begin{split}
    \mathbb{E}[\int_{\mu_{1,N}}^{\mu_{2,N}}dPy - \frac{1}{N}\sum_{i=1}^N \mathbb{I}_{y_i \in [\mu_{1,N},\mu_{2,N}]}] &= \int_{\mu_{1,N}}^{\mu_{2,N}}dPy - \frac{1}{N}\sum_{i=1}^N \mathbb{E}[\mathbb{I}_{y_i \in [\mu_{1,N},\mu_{2,N}]}] \\
    &= \int_{\mu_{1,N}}^{\mu_{2,N}}dPy - \frac{1}{N}\sum_{i=1}^N\int_{\mu_{1,N}}^{\mu_{2,N}}dPy \\
    &=0
\end{split}
\end{equation*}
Hence, $\mathbb{E}[Q_N] \leq \mathbb{E}[\int_{\mu_{1,N}}^{\mu_{2,N}}dPy - \frac{1}{N}\sum_{i=1}^N \mathbb{I}_{y_i \in [\mu_{1,N},\mu_{2,N}]}] =0$.
% \end{proof}

\begin{equation*}
    \mathbb{V}[\int_{\mu_{1,N}}^{\mu_{2,N}}dPy - \frac{1}{N}\sum_{i=1}^N \mathbb{I}_{y_i \in [\mu_{1,N},\mu_{2,N}]}] = \frac{1}{N}\mathbb{V}[\mathbb{I}_{Y \in [\mu_{1,N},\mu_{2,N}]}] \\
\end{equation*}

Thus, $\mathbb{V}[Q_N] \leq \frac{1}{N}\mathbb{V}[\mathbb{I}_{Y \in [\mu_{1,N},\mu_{2,N}]}]$
\\
Moreover, $\mathbb{V}[\mathbb{I}_{Y \in [\mu_{1,N},\mu_{2,N}]}] = P(Y \in [\mu_{1,N},\mu_{2,N}])(1-P(Y \in [\mu_{1,N},\mu_{2,N}]))$ and the study of the function $f:x\mapsto x(1-x)$ for $x \in [0,1]$ shows that this function is bounded by $\frac{1}{4}$.
\\
Thus,  $\mathbb{V}[Q_N] \leq \mathbb{V}[\int_{\mu_{1,N}}^{\mu_{2,N}}dPy - \frac{1}{N}\sum_{i=1}^N \mathbb{I}_{y_i \in [\mu_{1,N},\mu_{2,N}]}] \leq \frac{1}{4N}$
\end{proof}

% \RQRWcoverage*
\begin{customthm}{3.4}[RQR-W In-sample Coverage]
For any random  variable $Y$ associated with an input x, $\forall \, \alpha \in [0,1],$
\begin{multline*}
(\mu_1^*(x),\mu_2^*(x)) = \argmin_{\mu_1,\mu_2}{\{ \mathbb{E}_Y( (\mu_1,\mu_2),x,Y))\}}\implies \mathbb{P}(\mu_1^*(x) < Y < \mu_2^*(x)) = \alpha 
\end{multline*}
\end{customthm}
\begin{proof}
Similarly, we are starting by rewriting the RQR-W loss with indicator functions.
\begin{equation*}
    \begin{split}
        \text{RQR-W}_{\alpha}(\mu_1,\mu_2,y) &  = \text{RQR}_{\alpha}(\mu_1,\mu_2,y) + \frac{\lambda(\mu_2-\mu_1)^2}{2}\\
         & = (y-\mu_1)(y-\mu_2)(\alpha-\mathbb{I}_{y \in [\mu_1,\mu_2]}) + \frac{\lambda(\mu_2-\mu_1)^2}{2}
    \end{split}
\end{equation*}

Then, we consider the expectation of the loss :
\begin{equation*}
\begin{multlined}
\mathbb{E}(\text{RQR-W}_{\alpha}(\mu_1,\mu_2,Y)) = \\
\alpha \int_{-\infty}^{\infty} (y - \mu_1)(y - \mu_2)d\mathbb{P}_Y(y) - \int_{\mu_1}^{\mu_2} (y - \mu_1)(y - \mu_2)d\mathbb{P}_Y(y) + \frac{\lambda(\mu_2-\mu_1)^2}{2}
\end{multlined}
\end{equation*}

We find the expression of the partial derivatives with respect to the bounds:

\begin{equation*}
\begin{split}
\frac{\partial \mathbb{E}(\text{RQR-W}_{\alpha})}{\partial \mu_1} & =  -\alpha \int_{-\infty}^{\infty} (y - \mu_2)d\mathbb{P}_Y(y) + \int_{\mu_1}^{\mu_2} (y - \mu_2)d\mathbb{P}_Y(y) - \lambda(\mu_2-\mu_1)\\
\frac{\partial \mathbb{E}(\text{RQR-W}_{\alpha})}{\partial \mu_2} & = -\alpha \int_{-\infty}^{\infty} (y - \mu_1)d\mathbb{P}_Y(y) + \int_{\mu_1}^{\mu_2} (y - \mu_1)d\mathbb{P}_Y(y) + \lambda(\mu_2-\mu_1)
\end{split}
\end{equation*}

At the minimum, the gradient of the expected loss is null. Thus, if there is a minimum at the point $(\mu_1^*,\mu_2^*)$ with $\mu_2^* > \mu_1^*$,
\begin{equation*}
\begin{split}
& \frac{\partial \mathbb{E}(\text{RQR-W}_{\alpha})}{\partial \mu_1}\Bigr|_{\mu_1^*,\mu_2^*} - \frac{\partial \mathbb{E}(\text{RQR-W}_{\alpha})}{\partial \mu_2}\Bigr|_{\mu_1^*,\mu_2^*} = 0 \\
\implies & -\alpha \int_{-\infty}^{\infty} (\mu_1^* - \mu_2^*)d\mathbb{P}_Y(y) + \int_{\mu_1^*}^{\mu_2^*} (\mu_1^*  - \mu_2^*)d\mathbb{P}_Y(y) + 2\lambda(\mu_1^* - \mu_2^*)  = 0\\
\implies & -\alpha (\mu_1^* - \mu_2^*)\int_{-\infty}^{\infty}d\mathbb{P}_Y(y) + (\mu_1^*  - \mu_2^*) \int_{\mu_1^*}^{\mu_2^*} d\mathbb{P}_Y(y) + 2\lambda(\mu_1^* - \mu_2^*) = 0 \\
\implies & -\alpha \int_{-\infty}^{\infty}d\mathbb{P}_Y(y) + \int_{\mu_1^*}^{\mu_2^*} d\mathbb{P}_Y(y) + 2\lambda = 0 \\
\implies &   \mathcal{P}(\mu_1^* < Y < \mu_2^*) = \alpha - 2\lambda
\end{split}
\end{equation*}

Then, as $\alpha$ is a constant, we can replace it with a corrected term. When choosing $\hat{\alpha} = \alpha + 2\lambda$, we obtain 

\begin{equation*}
    (\mu_1^*,\mu_2^*) = \argmin_{\mu_1,\mu_2}{\{ \mathbb{E}(\text{RQR-W}_{\hat{\alpha}}(\mu_1,\mu_2,Y))\}} 
    \implies \mathcal{P}(\mu_1^* < Y < \mu_2^*) = \hat{\alpha} -2\lambda = \alpha
\end{equation*}

\end{proof}

% \finiteRQRW*
\begin{customthm}{3.5}[RQR-W with Finite Samples] \label{thm:finiteRQRW}
  For any random variable $Y$ associated with an input $x$, we consider N realizations of this random variable: $\{y_i\}_{i=1,N}$. $ \forall \, \alpha \in [0,1]$ such that $\alpha \cdot N \in \mathbb{N}$,
  \begin{equation*}
    (\mu_1^*(x),\mu_2^*(x)) = \argmin_{\mu_1,\mu_2}{\{ \sum_{i=1}^N\mathcal{L}^{\textup{RQR-W}}_{\alpha+ 2\lambda}((\mu_1,\mu_2), x, y_i)\}} \implies  \frac{1}{N}\sum_{i=1}^N\mathbb{I}_{y_i \in [\mu_1*,\mu_2*]} = \alpha  \\
  \end{equation*}
\end{customthm}

\begin{proof}

We rewrite our sum as an integral with discrete density $\mathbb{I}_{y\in\{y_i\}}dy$ :
\begin{equation*}
    \sum_{i=1}^N\mathcal{L}^{\text{RQR-W}}_{\alpha + 2\lambda}((\mu_1,\mu_2), x, y_i) =  \alpha \int_{-\infty}^{\infty} (y - \mu_1)(y - \mu_2)\mathbb{I}_{y\in\{y_i\}}dy - \int_{\mu_1}^{\mu_2} (y - \mu_1)(y - \mu_2)\mathbb{I}_{y\in\{y_i\}}dy + N\frac{\lambda(\mu_2-\mu_1)^2}{2}
\end{equation*}

We find the expression of its partial derivatives with respect to the bounds:

\begin{equation*}
\begin{split}
\frac{\partial \sum_{i=1}^N\mathcal{L}^{\text{QFR}}_{\alpha+2\lambda}((\mu_1,\mu_2), x, y_i)}{\partial \mu_1} & =  -(\alpha+2\lambda) \int_{-\infty}^{\infty} (y - \mu_2)\mathbb{I}_{y\in\{y_i\}}dy + \int_{\mu_1}^{\mu_2} (y - \mu_2)\mathbb{I}_{y\in\{y_i\}}dy -N\lambda(\mu_2 - \mu_1) \\
\frac{\partial \sum_{i=1}^N\mathcal{L}^{\text{QFR}}_{\alpha}((\mu_1,\mu_2), x, y_i)}{\partial \mu_2} & =-(\alpha+2\lambda) \int_{-\infty}^{\infty} (y - \mu_1)\mathbb{I}_{y\in\{y_i\}}dy + \int_{\mu_1}^{\mu_2} (y - \mu_1)\mathbb{I}_{y\in\{y_i\}}dy +N\lambda(\mu_2 - \mu_1)
\end{split}
\end{equation*}

At the minimum, the gradient of the expected loss is null. Thus, if there is a minimum at the point $(\mu_1^*,\mu_2^*)$ with $\mu_2^* > \mu_1^*$,
\begin{equation*}
\begin{split}
& \frac{\partial \sum_{i=1}^N\mathcal{L}^{\text{RQR-W}}_{(\alpha+2\lambda)}((\mu_1,\mu_2), x, y_i)}{\partial \mu_1}\Bigr|_{\mu_1^*,\mu_2^*} - \frac{\partial\sum_{i=1}^N\mathcal{L}^{\text{RQR-W}}_{\alpha+2\lambda}((\mu_1,\mu_2), x, y_i)}{\partial \mu_2}\Bigr|_{\mu_1^*,\mu_2^*} = 0 \\
\implies & -(\alpha+2\lambda) \int_{-\infty}^{\infty} (\mu_1^* - \mu_2^*)\mathbb{I}_{y\in\{y_i\}}dy + \int_{\mu_1^*}^{\mu_2^*} (\mu_1^*  - \mu_2^*)\mathbb{I}_{y\in\{y_i\}}dy  + 2N\lambda(\mu_2^* - \mu_1^*)= 0\\
\implies & -(\alpha+2\lambda)(\mu_1^* - \mu_2^*) \sum_{i=1}^N 1 + (\mu_1^* - \mu_2^*) \sum_{y_i\in [\mu_1^*,\mu_2^*]}1 + 2N\lambda(\mu_2^* - \mu_1^*)=0\\
\implies & \frac{1}{N}\sum_{i=1}^N\mathbb{I}_{y_i \in [\mu_1*,\mu_2*]} = \alpha 
\end{split}
\end{equation*}
\end{proof}

\begin{customthm}{3.6}[RQR-W is Unbiased and Consistant] \label{thm:RQRWasymptoticdata}
% \begin{theorem36}[RQR-W is Unbiased and Consistant] \label{thm:RQRWasymptoticdata}
 For any random variable $Y$ associated with an input $x$, we consider N realizations of this random variable : $\{y_i\}_{i=1,N}$. $(\mu_{1,N},\mu_{2,N})$ are the bounds of our estimator trained on these N samples. $ \forall \, \alpha \in [0,1]$, we name $Q_N$ the absolute true miscoverage of our estimator  

 \begin{equation*}
     Q_N = |\int_{\mu_{1,N}}^{\mu_{2,N}}dPy - \alpha|
 \end{equation*}

Then $E[Q_N] = 0$ and the variance of $Q_N$ is bounded by $\frac{1}{4N}$

\end{customthm}
\begin{proof}
We can decompose $Q_N$ in two terms :

\begin{equation*}
    Q_N \leq |\int_{\mu_{1,N}}^{\mu_{2,N}}dPy - \frac{1}{N}\sum_{i=1}^N \mathbb{I}_{y_i \in [\mu_{1,N},\mu_{2,N}]}| + |\frac{1}{N}\sum_{i=1}^N \mathbb{I}_{y_i \in [\mu_{1,N},\mu_{2,N}]} - \alpha|
\end{equation*}

The theorem \ref{thm:finiteRQRW} proves that $\forall N \in \mathbb{N}$ the second term is null.

Then, the first term is simply the difference between an integral and its Monte Carlo estimator. Hence, we can derive well-known results about its mean $\mathbb{E}[Q_N]$ and variance $\mathbb{V}[Q_N]$.

\begin{equation*}
\begin{split}
    \mathbb{E}[\int_{\mu_{1,N}}^{\mu_{2,N}}dPy - \frac{1}{N}\sum_{i=1}^N \mathbb{I}_{y_i \in [\mu_{1,N},\mu_{2,N}]}] &= \int_{\mu_{1,N}}^{\mu_{2,N}}dPy - \frac{1}{N}\sum_{i=1}^N \mathbb{E}[\mathbb{I}_{y_i \in [\mu_{1,N},\mu_{2,N}]}] \\
    &= \int_{\mu_{1,N}}^{\mu_{2,N}}dPy - \frac{1}{N}\sum_{i=1}^N\int_{\mu_{1,N}}^{\mu_{2,N}}dPy \\
    &=0
\end{split}
\end{equation*}
Hence, $\mathbb{E}[Q_N] \leq \mathbb{E}[\int_{\mu_{1,N}}^{\mu_{2,N}}dPy - \frac{1}{N}\sum_{i=1}^N \mathbb{I}_{y_i \in [\mu_{1,N},\mu_{2,N}]}] =0$.

\begin{equation*}
    \mathbb{V}[\int_{\mu_{1,N}}^{\mu_{2,N}}dPy - \frac{1}{N}\sum_{i=1}^N \mathbb{I}_{y_i \in [\mu_{1,N},\mu_{2,N}]}] = \frac{1}{N}\mathbb{V}[\mathbb{I}_{Y \in [\mu_{1,N},\mu_{2,N}]}] \\
\end{equation*}

Thus, $\mathbb{V}[Q_N] \leq \frac{1}{N}\mathbb{V}[\mathbb{I}_{Y \in [\mu_{1,N},\mu_{2,N}]}]$
\\
Moreover, $\mathbb{V}[\mathbb{I}_{Y \in [\mu_{1,N},\mu_{2,N}]}] = P(Y \in [\mu_{1,N},\mu_{2,N}])(1-P(Y \in [\mu_{1,N},\mu_{2,N}]))$ and the study of the function $f:x\mapsto x(1-x)$ for $x \in [0,1]$ shows that this function is bounded by $\frac{1}{4}$.
\\
Thus,  $\mathbb{V}[Q_N] \leq \mathbb{V}[\int_{\mu_{1,N}}^{\mu_{2,N}}dPy - \frac{1}{N}\sum_{i=1}^N \mathbb{I}_{y_i \in [\mu_{1,N},\mu_{2,N}]}] \leq \frac{1}{4N}$

\end{proof}

% \WMunique*
\begin{customprop}{3.7}[Existence and Uniqueness of Solution]
$\mu_1^{min}$ and $\mu_2^{max}$ denote the bounds of our optimization problem. For a target distribution $Y$ with a cumulative distribution function that is k-Lipschitz continuous with $k < 1+\frac{\alpha}{\mu_2^{max}-\mu_1^{min}}$, when $\lambda > max(0,\int_{\mu_1^{min}}^{\mu_2^{max}} d\mathbb{P}_Y(y) - \alpha)$, the minimum of $\mathcal{L}^{\text{RQR-W}}_{\alpha+2\lambda}$ exists and is unique.
\end{customprop}
\begin{proof}
We study the function $\mu_1,\mu_2  \mapsto \mathbb{E}(\mathcal{L}^{\text{RQR-W}}_{\alpha}((\mu_1,\mu_2),Y)$ in a closed subset of $\mathbb{R}^2$ where $\mu_1 < \mu_2$. We named this subset $\mathbb{D}$. 
On this closed subset of the space to say, it exists $\mu_2^{max}$ and $\mu_1^{min}$ such that $\forall (\mu_1,\mu_2) \in \mathbb{D}$  $\mu_2 < \mu_2^{max} $ and $\mu_1 > \mu_1^{min}$. 

Moreover, we assume that Y can be associated with a probability density function $d\mathbb{P}_Y$ and that its cumulative distribution function is k-Lipschitz continuous with $k < 1 + \frac{\alpha}{\mu_2^{max}-\mu_1^{min}} $
\begin{equation*}
\forall (\mu_2,\mu_1) \in [\mu_1^{min},\mu_2^{max}]^2 ~|\int_{\mu_1}^{\mu_2}d\mathbb{P}_Y(y)| < (1 + \frac{\alpha}{\mu_2^{max}-\mu_1^{min}})|(\mu_2-\mu_1)| 
\end{equation*}

The non-negativity of the studied function gives us the existence of the minimum.

To demonstrate the uniqueness of the minimum, we show that the eigenvalues of the Hessian matrix are positive. We start by computing the gradient of the expected loss :

\begin{equation*}
\begin{split}
    \nabla\mathbb{E}(\mathcal{L}^{\text{RQR-W}}_{\alpha}) = &
\begin{bmatrix}
    \frac{\partial \mathbb{E}(\mathcal{L}^{\text{RQR-W}}_{\alpha})}{\partial \mu_1} \\
    \frac{\partial \mathbb{E}(\mathcal{L}^{\text{RQR-W}}_{\alpha})}{\partial \mu_2} 
\end{bmatrix} \\
    = &
    \begin{bmatrix}
    -\alpha \int_{-\infty}^{\infty} (y - \mu_2)d\mathbb{P}_Y(y) + \int_{\mu_1}^{\mu_2} (y - \mu_2)d\mathbb{P}_Y(y)  - \lambda(\mu_2-\mu_1) \\
    -\alpha \int_{-\infty}^{\infty} (y - \mu_1)d\mathbb{P}_Y(y) + \int_{\mu_1}^{\mu_2} (y - \mu_1)d\mathbb{P}_Y(y) + \lambda(\mu_2-\mu_1)
    \end{bmatrix}
    \end{split}
\end{equation*}

Then, we compute the Hessian matrix:
\begin{equation*}
    \nabla^2\mathbb{E}(\mathcal{L}^{\text{RQR-W}}_{\alpha}) = 
    \begin{bmatrix}
       \mu_2 - \mu_1 + \lambda & \alpha - \int_{\mu_1}^{\mu_2} d\mathbb{P}_Y(y) - \lambda\\
       \alpha - \int_{\mu_1}^{\mu_2} d\mathbb{P}_Y(y) - \lambda & \mu_2 - \mu_1 + \lambda
    \end{bmatrix}
\end{equation*}

The eigenvalues of the Hessian matrix are given by  $\lambda_{\pm} = \mu_2 - \mu_1 + \lambda \pm | \alpha - \int_{\mu_1}^{\mu_2}d\mathbb{P}_Y(y) - \lambda |$

From that, we get that $\lambda > \int_{\mu_1^{min}}^{\mu_2^{max}} d\mathbb{P}_Y(y) - \alpha \implies \forall (\mu_1,\mu_2) \in \mathbb{D} ~\lambda > \int_{\mu_1}^{\mu_2} d\mathbb{P}_Y(y) - \alpha~(1)$ because the probability density function $d\mathbb{P}_Y$ is positive. 

Additionally, we use the assumption of the k-Lipschitz continuity of the CDF and we obtain the following inequality : 

\begin{equation*}
    \forall (\mu_1,\mu_2) \in \mathbb{D} ~\mu_2 - \mu_1  \geq \int_{\mu_1}^{\mu_2}(1 + \frac{\alpha}{\mu_2-\mu_1}) - \alpha  \geq \int_{\mu_1}^{\mu_2}(1 + \frac{\alpha}{\mu_2^{max}-\mu_1^{min}}) - \alpha \geq \int_{\mu_1}^{\mu_2}d\mathbb{P}_Y - \alpha
\end{equation*}

Therefore, under the condition $\lambda > \max(0,\int_{\mu_1^{min}}^{\mu_2^{max}} d\mathbb{P}_Y(y) - \alpha$), we obtain the positiveness of  $\lambda_-$ : 

\begin{equation*}
    \begin{split}
        & \lambda_- =  \mu_2 - \mu_1 + \lambda - | \alpha - \int_{\mu_1}^{\mu_2}d\mathbb{P}_Y(y) - \lambda | = \mu_2 - \mu_1 + \lambda - (\lambda - \alpha + \int_{\mu_1}^{\mu_2}d\mathbb{P}_Y(y)) ~(1) \\
        \implies & \lambda_- = \mu_2 - \mu_1 - (-\alpha + \int_{\mu_1}^{\mu_2}d\mathbb{P}_Y(y)) > 0
    \end{split}
\end{equation*}
        
The first eigenvalue $\lambda_+$ is obviously non-negative, thus both eigenvalues are non-negative which means that the Hessian matrix is semi-definite positive. 

In conclusion, when the condition $\lambda > \max(0,\int_{\mu_1^{min}}^{\mu_2^{max}}d\mathbb{P}_Y(y) -\alpha) $ is respected, our optimal interval prediction loss is convex. Hence, it has a unique minimum.

Both the existence and the uniqueness of the minimum have been proven. 
\end{proof}

\begin{customthm}{3.8}[Validity of RQR-O - A variation of the validity of orthogonal quantile regression theorem from \citet{feldman2021improving}] \label{thm:RQR-Ocoverage}
% \begin{theorem38}[Validity of RQR-O - A variation of the validity of orthogonal quantile regression theorem from \citet{feldman2021improving}] \label{thm:RQR-Ocoverage}
Suppose $Y |X = x$ follows a continuous distribution for each $x \in \mathcal{X}$, and suppose that $\mu_1(X), \mu_2(X) \in \mathcal{F}$.
Consider the infinite-data version of the RQR-O optimization :
\begin{equation*}
\argmin_{\mu_1,\mu_2 \in \mathbb{F}}{\{ \mathbb{E}(\mathcal{L}^{\text{RQR}}_{\alpha}((\mu_1,\mu_2),X,Y) + \gamma\mathbb{R}(w,m))\}} 
\end{equation*}
Then, true conditional intervals with $\alpha$ coverage are solutions to the above optimization problem.
\end{customthm}

\begin{proof}
We note $m = \mathbb{I}_{y \in [\mu_1,\mu_2]}$ the coverage function and $w$ = $|\mu_2-\mu_1|$ the interval length function. We consider the true conditional intervals that satisfied $\mathbb{P}(\mu_1(X)< Y < \mu_2(X)|X=x) = \alpha$. \citet{feldman2021improving} has shown that for such intervals, the coverage and interval length functions are independent. 

Therefore, similarly to vanilla QR or the Winkler score, these true conditional intervals are a solution to the RQR problem. Moreover, by definition, the independence of $w$ and $m$ fixes the Pearson correlation (or the HSIC score) of these functions to 0. Thus, the orthogonal penalty term that is based on these metrics is also minimized. 
Hence, the true conditional intervals with $\alpha$ coverage are a solution to the RQR-0 optimization problem. 
\end{proof}

%%%%%%%%%%%%%%%%%%%%%

\section{Experimental Details} \label{app:experimentdetails}
We follow standard preprocessing on all datasets with features standardized such that they have zero mean and unit variance and targets divided by their mean. All experiments are repeated over 10 random seeds with means and standard errors of means reported throughout. We train two-layer neural networks with 64 hidden units and ReLU activations throughout (consistent with \citet{feldman2021improving}). We also use the Adam optimizer \citep{kingma2014adam}. 

In the benchmarking of RQR-W, we applied the following experimental design. A training-validation-testing split with a ratio [0.6,0.2,0.2] is applied. Then we perform a hyperparameter grid search for all methods. Each combination is first fit to the training data with the model selection based on evaluations on the validation set. For a given hyperparameter combination, the best model is selected based on the epoch that achieves the best interval length (such that target coverage is achieved). The reported values are the evaluations on the test set. All methods evaluated follow this protocol. The grid search considers the following hyperparameters: dropout probability $\in \{0.1,0.2,0.3\}$, learning rate $\in  [0.1,0.05,0.01,0.005,0.001,0.0005,0.0001]$, and regularization coefficient $\in [0.01,0.1,1,5,10,20,30,40,50]$. Other hyperparameters are fixed: number of epochs $400$, batch size $10000$.

In the benchmarking of RQR-O, we followed the exact procedure and used the implementation of the OQR baseline in \citet{feldman2021improving}. In what follows, we will describe this procedure. A training-validation-testing split with a ratio of 0.4 for testing and a further split of 0.9-0.1 for training-validation is applied. The default hyperparameters are used for both methods with only the regularization coefficient tuned. Specifically, it is set to 1 and then decreased in increments following $[1, 0.5, 0.1, 0.05, \ldots]$ until the desired coverage is achieved. Both the RQR-O and OQR hyperparameters are - learning rate: 1e-3, maximum number of epochs: 10000, dropout probability: 0, and batch size: 1024. Early stopping patience is set to 200 epochs.

\begin{table}[h]
\caption{Summary statistics of the standard benchmark datasets for quantile regression.}
\label{tab:summarystats}
\centering
\begin{tabular}{ccccc} \toprule
{Dataset} & {Mean} & {Variance} & {Skewness} & {Kurtosis} \\ 
\midrule
{concrete} & 35.82 & 279.08 & 0.42 & -0.32 \\
{wine}  & 5.64 & 0.65 & 0.22 & 0.29 \\
{yacht}  & 10.50 & 229.84 & 1.75 & 2.00 \\
{energy}  & 22.31 & 101.81 & 0.36 & -1.25 \\
{kin8nm}  & 0.71 & 0.07 & 0.09 & -0.53 \\
{naval}  & 0.99 & 0.00 & -0.00 & -1.20 \\
{power}  & 454.37 & 291.28 & 0.31 & -1.05 \\
{boston}  & 22.53 & 84.59 & 1.10 & 1.47 \\
{protein} & 7.75 & 37.43 & 0.57 & -1.14 \\
\bottomrule
\end{tabular}
\end{table}

\section{Formal Description of Evaluation Metrics} \label{app:metrics}
To ensure this work is self-contained, in this section we include a formal description of the evaluation metrics introduced in \Cref{sec:background} and used in \Cref{sec:experiments}. We implement these metrics as described in the referenced works. In all cases, we evaluate on some evaluation dataset $\mathcal{D}$ consisting of $N$ input/target pairs $(\mathbf{x}, y)$. 

\begin{definition}[Prediction Interval Coverage Probability (PICP) \citep{tagasovska2019single}]
Defined as the number of true observations falling inside the estimated prediction interval, this is calculated as
\begin{equation*}
    PICP \vcentcolon= \frac{1}{N} \sum_{(\mathbf{x}, y) \sim \mathcal{D}} \mathbb{I}_{\mu_1 \leq y \leq \mu_2}.
\end{equation*}
\end{definition}

\begin{definition}[Mean Prediction Interval Width (MPIW) \citep{tagasovska2019single}]
Defined as the average interval width across the evaluation dataset, this is calculated as
\begin{equation*}
    MPIW \vcentcolon= \frac{1}{N} \sum_{(\mathbf{x}, y) \sim \mathcal{D}} |{\mu_2 - \mu_1}|.
\end{equation*}
\end{definition}

\begin{definition}[Width-coverage Pearson's Correlation (WCPC) \citep{feldman2021improving}] \label{def:WCPC}
Denoting $\mathbf{w}$ as the vector of interval widths where ${w}_i = |\mu_2(\mathbf{x}_i) - \mu_1(\mathbf{x}_i)|$ and $\mathbf{m}$ as the indicator vector of coverage events where ${m}_i = \mathbb{I}_{y_i \in [\mu_1(\mathbf{x}_i), \mu_2(\mathbf{x}_i)]}$ for $ i \in \{1, 2, \ldots, N\}$, then we define
\begin{equation*}
    WCPC \vcentcolon= \left|\frac{\text{Cov}(\mathbf{w}, \mathbf{m})}{\text{Var}(\mathbf{w})\text{Var}(\mathbf{m})}\right|.
\end{equation*}
\end{definition}

\begin{definition}[Hilbert-Schmidt Independence
Criterion (HSIC)  \citep{greenfeld2020robust, feldman2021improving}]
In this definition, upper case bold letters denote matrices. Using the same definitions of $\mathbf{w}$ and $\mathbf{m}$ from \Cref{def:WCPC}, the coverage kernel matrix is given by $\mathbf{R}_{i,j} = k(\mathbf{m}_i, \mathbf{m}_j)$ and the width kernel is given by $\mathbf{K}_{i,j} = k(\mathbf{w}_i, \mathbf{w}_j)$ where $k$ denotes the Gaussian kernel. We also introduce a centering matrix $\mathbf{H}_{i,j} = \delta_{i,j} - \frac{1}{N}$ where $\delta_{i,j} = 1$ if $i = j$ and $0$ otherwise.  Then we can calculate HSIC as
\begin{equation*} 
    HSIC \vcentcolon=  \sqrt{\frac{tr(\mathbf{KHRH})}{(N - 1)^2}} 
\end{equation*}
\end{definition}

\section{Crossing Bounds} \label{sec:crossingbounds}
A well-known limitation of the quantile regression approach to constructing prediction intervals is any estimation error of the population level intervals can result in \textit{crossing bounds} where the upper bound of the interval falls \textit{below} the lower bound of the interval (see e.g. \citet{brando2022deep, park2022learning}). Apart from being a conceptual limitation, this can also affect the users' trust in the system. A key advantage of directly estimating the interval as proposed in this work is that the raw model outputs are not strictly associated with being a particular bound. In other words, our objective is invariant to permutations of the upper and lower bounds, and simply sets the lower bound to be the minimum and the upper bound to be the maximum of the two model outputs. Therefore, \textit{which} output neuron acts as either bound can even change on a per-example basis. Formally, this can be observed in the $\kappa = (y - \mu_1)(y - \mu_2)$ term in our loss function (\Cref{eqn:RQRbaseobjective}) which is clearly invariant to permutations between $\mu_1$ and $\mu_2$. In contrast, crossing bounds in the case of quantile regression will result in miscoverage if the bounds are permuted to avoid a negative interval. This may be considered a distinct advantage of the direct interval prediction approach which sidesteps the conceptual issue of crossing quantiles which occurs due to the independent estimation of the upper and lower bounds in quantile-based approaches.

Recent work in \citet{brando2022deep} has proposed methods for preventing crossing bounds in the case of quantile regression by using Chebyshev polynomials to add additional constraints into the the objective. In what follows we compare our proposed RQR-W method to this non-crossing quantile method using the same experimental setup as in \Cref{tab:widthresults}. As this method is a quantile estimation method and does not prescribe how to construct intervals we include two approaches (a) ``narrowest'' where we bin search the narrowest interval at each new inference and (b) ``symmetric'' where we use the symmetric quantiles (0.05, 0.95). The results are included in \Cref{tab:noncrossing}.

\begin{table}[h]
\centering
\caption{\textbf{Comparison between RQR-W vs Deep Non-crossing Quantiles}. We present two different versions of the deep non-crossing quantiles method: "Narrowest" and "Symmetric" (see text for details).  For the 9 datasets, we display interval width and the coverage level achieved in parentheses. }
\label{tab:trunc_gaussian}
\begin{tblr}{
  width = \linewidth,
  colspec = {Q[179]Q[242]Q[242]Q[244]},
  cells = {c},
  hline{1,11} = {-}{0.08em},
  hline{2} = {1,3-4}{0.03em},
  hline{2} = {2}{},
}
Dataset & RQR-W & Narrowest & Symmetric\\
concrete & 0.43 (89.95) & 0.20 (57.33) & 0.30 (73.30)\\
boston & 0.51 (89.8) & 0.26 (64.90) & 0.36 (78.53)\\
naval & 0.01 (90.95) & 0.01 (97.43) & 0.02 (99.04)\\
energy & 0.13 (89.87) & 0.07 (50.39) & 0.14 (72.86)\\
wine & 0.35 (89.44) & 0.19 (62.56) & 0.28 (76.75)\\
power & 0.03 (90.73) & 0.02 (73.70) & 0.03 (89.83)\\
protein & 1.59 (90.07) & 1.13 (78.04) & 1.50 (89.26)\\
kin8nm & 0.34 (90.59) & 0.25 (70.96) & 0.38 (85.87)\\
yacht & 0.27 (93.23) & 0.29 (51.29) & 0.35 (61.61)
\end{tblr}
\label{tab:noncrossing}
\end{table}

In these results we note that, as with the previously evaluated methods for estimating all quantiles simultaneously, the non-crossing quantile approach consistently struggles to maintain coverage at test time. We seek solutions that minimize interval width \textit{for a fixed level of coverage} which is not achieved by this baseline which we attribute to two limitations. Firstly, when we select the narrowest possible interval from a set of possible intervals that each obtain marginal coverage, it is not a random choice and it is \textit{more} likely that we are selecting a particular interval in which coverage is not maintained. Therefore, analogous to the effect of \textit{overfitting} in the standard point prediction setting, we would expect that picking the narrowest intervals is likely to result in more miscoverage events than selecting a fixed interval (as is reflected in these results). Secondly, learning all quantiles simultaneously is a more challenging learning problem than simply learning two quantiles for a fixed capacity model. This results in poorer predictive performance at the task in hand.

\section{Motivation for Optimizing Interval Width \& Conditional Coverage} \label{sec:intervalwidth}

Although several methods might achieve a desired level of marginal coverage with their intervals, the task of finding a set of intervals that obtain such coverage is generally an underspecified problem with a potentially infinite number of admissible solutions. Given this, any interval-producing method is required to introduce some additional regularization (either implicit or explicit) in order to select among the possible solutions. In the case of quantile regression, the pair of symmetric quantiles are selected. However, given that other attributes are typically desired in these intervals (as illustrated by the large body of work that attempts to minimize interval width \citep{chung2021beyond, tagasovska2019single, pearce2018high} or maximize conditional coverage \citep{feldman2021improving, hunter2000quantile}), for many applications there are likely more preferable solutions than the symmetric quantiles found via quantile regression. This has been observed in real-world applications such as renewable energy forecasting where it has been noted that ``the probability distribution of the renewable energy source’s power output is generally skewed, thereby the width of central prediction intervals is often unnecessarily wide'' \citep{zhang2023contextual}.

One additional motivation for minimizing the width of intervals beyond the advantage of narrower predictive intervals is that the minimal width intervals generally lie in denser regions of the underlying probability distribution. Because we estimate these statistics using a sample from the population distribution we typically incur some variance in this estimate which is typically lower in the more dense regions of the probability space. To illustrate this, consider the distribution we presented in \Cref{fig:skewillustration} of the main text where the ground truth intervals are known. Now suppose we take a sample from this distribution (i.e. a training set) and estimate both the empirical minimum width intervals and the empirical quantiles. In both cases there is likely to be some estimation error. If we were to repeat this process multiple times we note that the variance around the estimates in lower-density regions (where the symmetric quantiles lie) is likely to be greater than in the high-density regions (where the minimum width intervals lie). As a consequence of this, quantile estimation methods may suffer from larger errors in estimating true quantiles (particularly in small sample regimes) resulting in more difficulties in obtaining exact coverage. We verify this claim empirically in \Cref{fig:boundvariance}.

\begin{figure}[h]
    \centering
    \includegraphics[width=\textwidth]{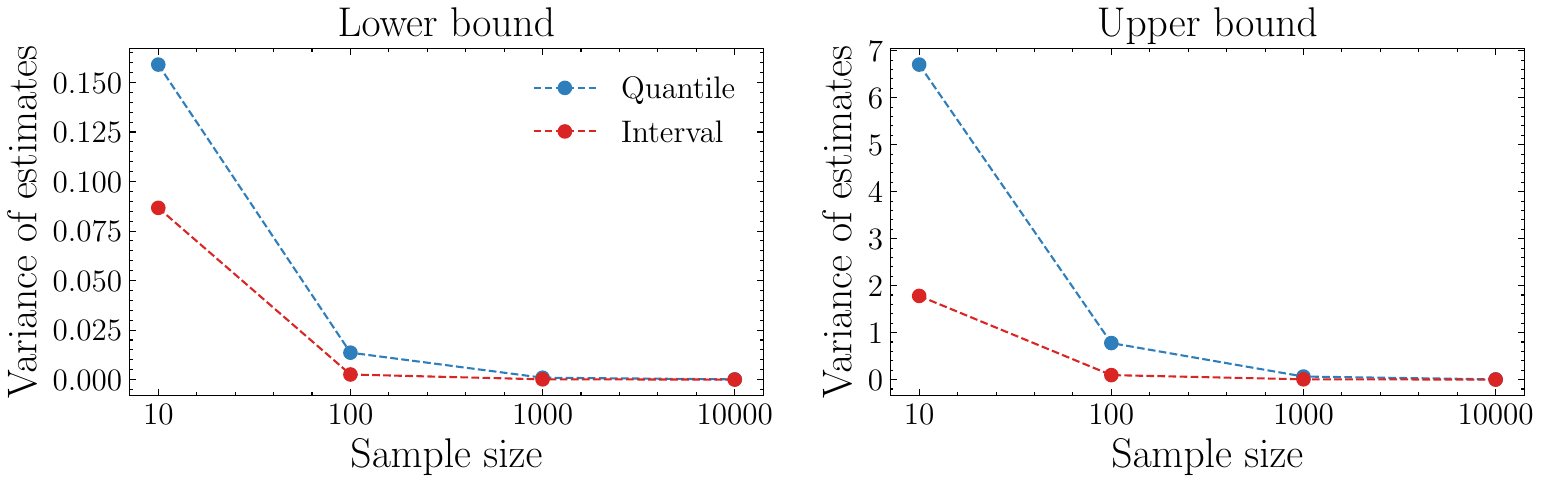}
    \caption{\textbf{Variance of the estimated bounds.} Estimating the minimal width interval rather than symmetric quantiles is also likely to result in lower variance estimates of the two bounds due to being estimated in more dense regions of the distribution. This is illustrated on the log-normal distribution example from \Cref{fig:skewillustration}. We take samples of various sizes from this distribution and estimate both the quantiles and the minimum width bounds. We find that the variance of these estimates is significantly larger when estimating the former.}
    \label{fig:boundvariance}
\end{figure}

\section{A Special Case: The Gumbel Distribution} \label{app:specialcase}
Given the analysis provided in \cref{sec:analytic}, where we illustrate that both the RQR and RQR-W objectives can narrow for coverage events and widen for miscoverage events for a single example, we might wish to better understand how they differ across a full data distribution (i.e. considering a mixture of coverage and miscoverage events). In \cref{sec:experiments} we demonstrate empirically that RQR-W does indeed find narrower solutions in aggregate, but investigating this analytically is typically not tractable. In this section, we provide the special case of a target that follows the Gumbel distribution in which we can derive a solution in closed form. Specifically, we compare the optimal solutions found by RQR and RQR-W and show that while both solutions achieve target coverage, the latter obtains narrower aggregate intervals.

We consider the Gumbel distribution $\mathcal{G}$ (with probability density function (PDF): $x\rightarrow e^{-(x+e^{-x})}$ and cumulative distribution function (CDF):  $x\rightarrow e^{-e^{(-x)}}$) as this distribution has a closed form inverse CDF: $Q : p \rightarrow -ln(-ln(p))$ which makes the calculus tractable.

Given the random variable $Y \sim \mathcal{G}$, for a coverage $\alpha$, the subset of valid intervals $(\mu_l,\mu_u)$ is characterised by the following equality $P(\mu_l \leq Y <= \mu_u) = \alpha.$

Thus, in this subset, we can express $\mu_u$ with respect to $\mu_l$ : 
\begin{align*}
    &CDF(\mu_u) - CDF(\mu_l) = \int_{\mu_l}^{\mu_u}yPdy = \alpha \\
    \implies & \mu_u = Q(\alpha + CDF(\mu_l)) \\
    \implies & \mu_u = -ln(-ln(\alpha + e^{-e^{-\mu_l}}))
\end{align*}

Hence, on the subset of valid intervals, our 2-dimensional optimization problem (finding $\mu_u$ and $\mu_l$) becomes a 1-dimensional problem (finding $\mu_l$). Moreover, we can express the interval width $W = \mu_u - \mu_l$ as a function of $\mu_l$ such that $W : \mu_l \rightarrow -ln(-ln(\alpha + e^{-e^{-\mu_l}})) -\mu_l$ and both losses as a function of $\mu_l$ in a similar manner ($\mathcal{L}^\text{RQR}(\mu_l,\mu_u) = \mathcal{L}^\text{RQR}(\mu_l)$ and $\mathcal{L}^\text{RQR-W}(\mu_l,\mu_u) = \mathcal{L}^\text{RQR-W}(\mu_l)$).

Finally, we want to show that the width of the interval found by optimizing $\mathcal{L}^\text{RQR-W}$ is less than the one obtained by optimizing $\mathcal{L}^\text{RQR}$. Thus, in Figure \ref{fig:compare}, for different values of $\mu_l$, we plot the expected value of $\mathcal{L}^\text{RQR}$, $\mathcal{L}^\text{RQR-W}$, and the respective widths of the resulting intervals (i.e. having $\mu_l$ as a lower bound).

\begin{figure}[h]
    \centering
    \includegraphics[width= 0.7\linewidth]{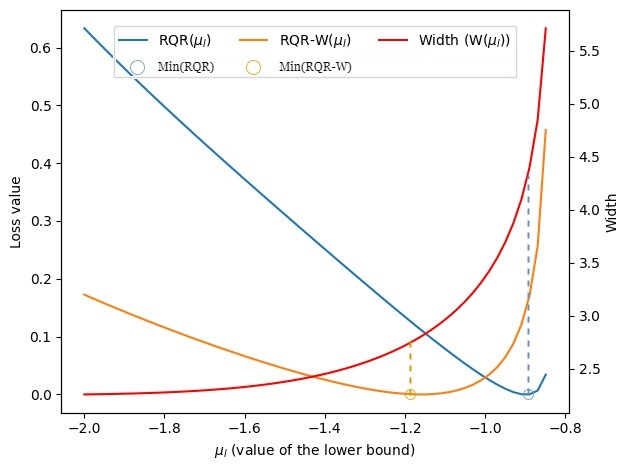}
    \caption{\textbf{A closed form solution for the Gumbel distribution.} Comparison of the minimum reached by the one-dimensional losses $ \mathcal{L}^\text{RQR}(\mu_l)$ and $\mathcal{L}^\text{RQR-W}(\mu_l)$. Note that the dashed lines visually map each loss functions minimum loss to their corresponding interval width.}
    \label{fig:compare}
\end{figure}

We name $\mu_l^{RQR}$ (respectively $\mu_l^{RQR-W}$), the lower bound that characterized the interval obtained by optimizing for the RQR objective (respectively the RQR-W objective), i.e  $\mu_l^{RQR}$ such that $\mu_l^{RQR} = \argmin_{\mu_l}\mathcal{L}^\text{RQR}(\mu_l)$.

In Figure \ref{fig:compare}, we observe that $\mu_l^{RQR} > \mu_l^{RQR-W}$ as $\mu_l^{RQR} \approx -0.9$ and $\mu_l^{RQR} \approx -1.2$ . Given the increasing width of the interval with respect to $\mu_l$ (red curve), we conclude that $W(\mu_l^{RQR}) > W(\mu_l^{RQR-W})$. \textbf{Hence, the interval found by optimizing RQR is wider than the interval found by optimizing RQR-W}. This demonstrates that including the additional regularization term in RQR-W does indeed find a solution with narrower intervals in aggregate.

\section{Additional Results}

\subsection{Complete RQR-W Results} \label{sec:RQRWcomplete}
In \Cref{tab:widthresultsfull} we present the complete results across the 12 datasets evaluating RQR-W against standard baselines (as summarized in \Cref{tab:widthresults}). For each dataset, we report both the mean coverage obtained (where the target was 90\%) and the mean interval width over 10 runs. One standard error is included in parentheses. Of course, shorter intervals are only desirable when the target level of coverage is maintained. Therefore, we exclude results that fail to achieve coverage which we indicate with a \cancel{strikethrough}. In line with previous work of \citet{tagasovska2019single}, we consider coverage to be met if the empirical coverage lies within 2.5\% of the target level $\alpha$ (after accounting for uncertainty). While obtaining empirical coverage that is greater than the desired coverage level may often be less harmful than obtaining less than the desired coverage level, at a minimum this still reflects an inefficiency. Additionally, there are many applications in which we are primarily interested in the miscoverage cases (e.g. extreme events) and, therefore, \emph{overcoverage} may be problematic in addition to being inefficient. Thus, this protocol symmetrically discards intervals that undercover and overcover by a certain margin.

\begin{table*}[h]
\caption{\small \textbf{Benchmarking RQR-W}. We evaluate each method across 12 datasets where the first row reports the marginal coverage obtained and the second row reports interval width as measured using MPIW. All results report the test set mean over 10 runs ($\pm$ a standard error). We \cancel{strikethrough} results where desired coverage is not achieved (see text for details). N/A$^*$ indicates a case of excessively wide intervals due to non-covered examples not being penalized for width -- exact values 1077.42 (1073.82).}
\label{tab:widthresultsfull}
\centering
% \resizebox{0.7\textwidth}{!}{
\begin{tabular}{ccccccc} 
\toprule
Dataset & Ours & QR & SQR-C & SQR-N & IR & Dist. Hist. \\
\hline \\[-3mm]
\multirow{2}{0.098\linewidth}{\hspace{0pt}miami} & 88.89 (0.32) & 90.50 (0.57) & 87.36 (0.46) & \cancel{85.97} (0.48) & 90.26 (0.49) & \multirow{2}{*}{\parbox[b]{2cm}{\includegraphics[width=\linewidth, height=0.7cm]{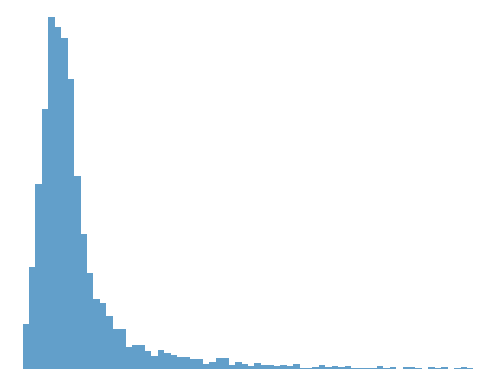}}} \\
 & \textbf{0.51} (0.01) & \textbf{0.51} (0.01) & 1.90 (0.05) & 1.52 (0.03) & 1.74 (0.12) &  \\[1.5mm]
\multirow{2}{0.098\linewidth}{\hspace{0pt}kin8nm} & 89.27 (0.40) & 91.89 (0.27) & 87.16 (0.49) & \cancel{85.53} (0.53) & 89.52 (0.46) & \multirow{2}{*}{\parbox[b]{2cm}{\includegraphics[width=\linewidth, height=0.7cm]{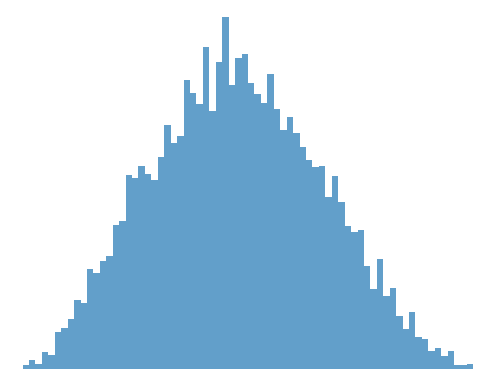}}} \\
 & \textbf{0.42} (0.01) & 0.50 (0.01) & 1.14 (0.01) & 1.10 (0.01) & 1.61 (0.12) &  \\[1.5mm]
\multirow{2}{0.098\linewidth}{\hspace{0pt}protein} & 88.47 (0.30) & 90.40 (0.23) & 89.01 (0.20) & \cancel{86.01} (0.32) & 90.38 (0.30) & \multirow{2}{*}{\parbox[b]{2cm}{\includegraphics[width=\linewidth, height=0.7cm]{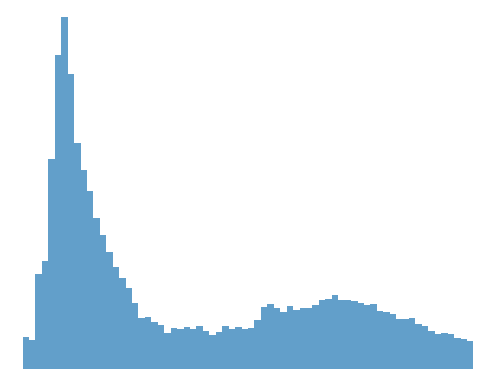}}} \\
 & \textbf{1.50} (0.01) & 1.61 (0.01) & 2.19 (0.02) & 2.05 (0.01) & 1.96 (0.09) &  \\[1.5mm]
\multirow{2}{0.098\linewidth}{\hspace{0pt}yacht} & 90.32 (0.42) & 91.45 (1.23) & \cancel{85.97} (1.18) & \cancel{84.68} (1.69) & 90.32 (0.96) &\multirow{2}{*}{\parbox[b]{2cm}{\includegraphics[width=\linewidth, height=0.7cm]{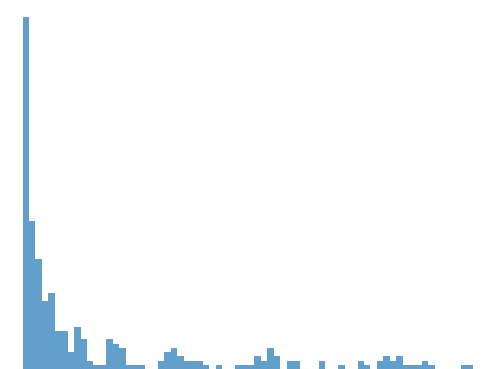}}} \\
 & \textbf{0.33} (0.02) & \textbf{0.32} (0.02) & 3.56 (0.38) & 2.91 (0.26) & 1.66 (0.29) &  \\[1.5mm]
\multirow{2}{0.098\linewidth}{\hspace{0pt}wine} & 90.16 (0.37) & 89.97 (0.36) & 91.94 (0.67) & 88.19 (0.97) & 88.03 (0.92) & \multirow{2}{*}{\parbox[b]{2cm}{\includegraphics[width=\linewidth, height=0.7cm]{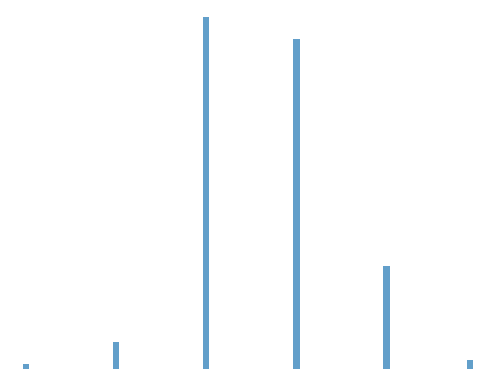}}} \\
 & 0.33 (0.01) & \textbf{0.28} (0.00) & 0.49 (0.03) & 0.48 (0.03) & 1.18 (0.12) &  \\[1.5mm]
\multirow{2}{0.098\linewidth}{\hspace{0pt}power} & 89.34 (0.55) & 89.92 (0.54) & 91.81 (1.34) & 89.11 (1.40) & \cancel{62.47} (13.63) & \multirow{2}{*}{\parbox[b]{2cm}{\includegraphics[width=\linewidth, height=0.7cm]{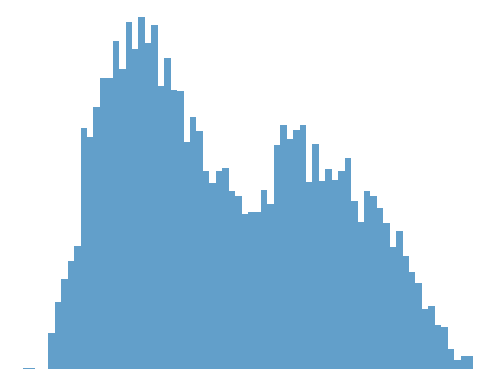}}} \\
 & \textbf{0.07} (0.01) & \textbf{0.06} (0.01) & 0.15 (0.03) & 0.15 (0.03) & 1.14 (0.23) &  \\[1.5mm]
\multirow{2}{0.098\linewidth}{\hspace{0pt}boston} & 88.14 (0.66) & 86.67 (1.06) & \cancel{82.55} (1.48) & \cancel{81.18} (1.58) & 90.29 (0.77) & \multirow{2}{*}{\parbox[b]{2cm}{\includegraphics[width=\linewidth, height=0.7cm]{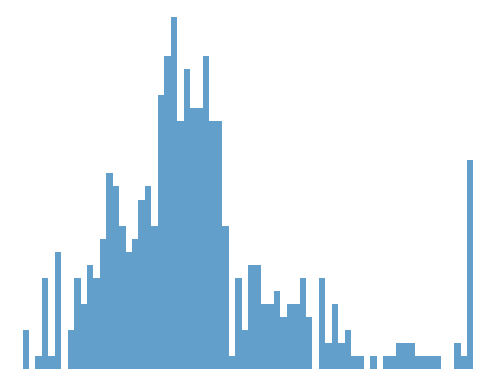}}} \\
 & \textbf{0.40} (0.02) & \textbf{0.38} (0.01) & 1.09 (0.03) & 1.01 (0.02) & 1.41 (0.15) &  \\[1.5mm]
\multirow{2}{0.098\linewidth}{\hspace{0pt}energy} & 89.68 (0.70) & 93.05 (0.88) & 89.42 (0.99) & \cancel{86.30} (0.99) & 89.42 (0.76) & \multirow{2}{*}{\parbox[b]{2cm}{\includegraphics[width=\linewidth, height=0.7cm]{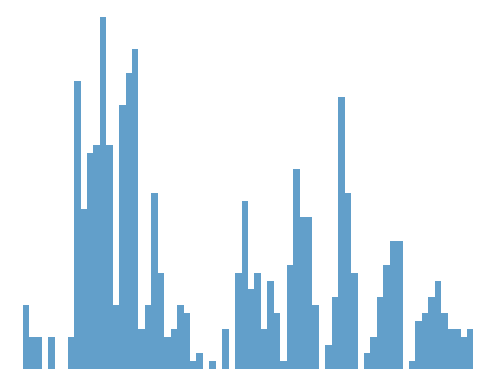}}} \\
 & \textbf{0.23} (0.01) & 0.29 (0.01) & 1.31 (0.01) & 1.23 (0.01) & 1.39 (0.17) &  \\[1.5mm]
\multirow{2}{0.098\linewidth}{\hspace{0pt}sulfur} & 88.87 (0.20) & 88.30 (0.34) & 87.83 (0.47) & 87.40 (0.50) & 89.41 (0.53) & \multirow{2}{*}{\parbox[b]{2cm}{\includegraphics[width=\linewidth, height=0.7cm]{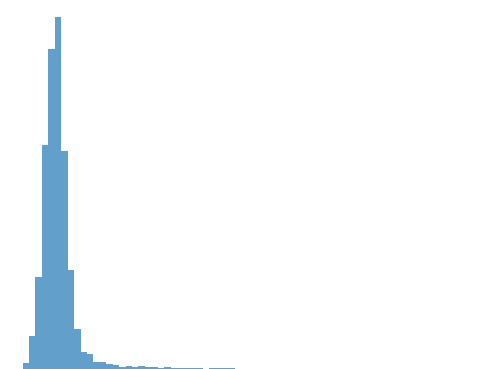}}} \\
 & \textbf{1.02} (0.01) & 1.05 (0.01) & 1.09 (0.03) & \textbf{1.02} (0.02) & 1.36 (0.07) &  \\[1.5mm]
\multirow{2}{0.098\linewidth}{\hspace{0pt}cpu\_act} & 89.14 (0.37) & 88.94 (0.48) & 90.85 (0.75) & 86.73 (1.10) & 90.32 (0.62) & \multirow{2}{*}{\parbox[b]{2cm}{\includegraphics[width=\linewidth, height=0.7cm]{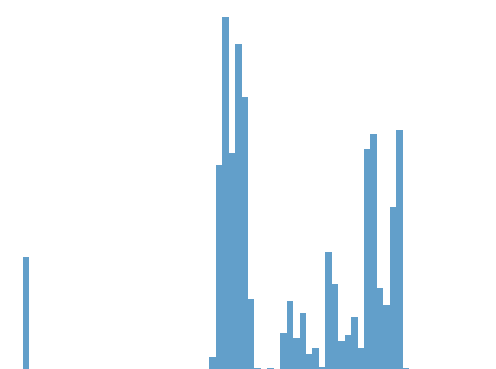}}} \\
 & 0.44 (0.00) & \textbf{0.41} (0.01) & 0.78 (0.01) & 0.76 (0.02) & N/A$^*$ (N/A$^*$) &  \\[1.5mm]
\multirow{2}{0.098\linewidth}{\hspace{0pt}concrete} & 88.74 (0.66) & 87.77 (1.09) & \cancel{86.02} (1.29) & \cancel{85.29} (1.39) & 89.37 (0.68) & \multirow{2}{*}{\parbox[b]{2cm}{\includegraphics[width=\linewidth, height=0.7cm]{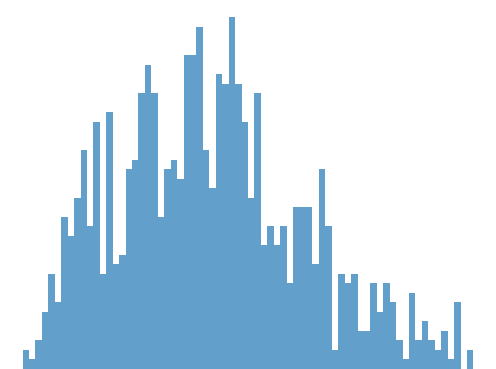}}} \\
 & \textbf{0.47} (0.03) & \textbf{0.44} (0.01) & 1.39 (0.03) & 1.33 (0.02) & 1.28 (0.16) &  \\[1.5mm]
\multirow{2}{0.098\linewidth}{\hspace{0pt}naval} & 88.81 (0.22) & 88.34 (0.25) & 90.70 (1.68) & 88.86 (1.77) & 91.16 (0.35) & \multirow{2}{*}{\parbox[b]{2cm}{\includegraphics[width=\linewidth, height=0.7cm]{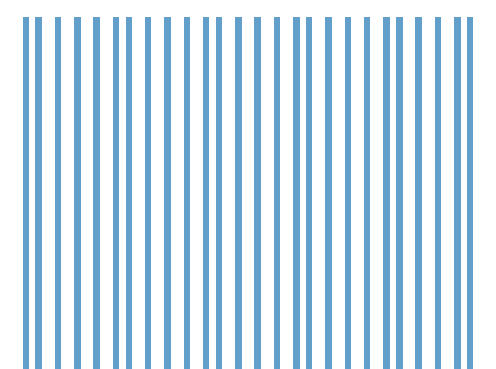}}} \\
 & 0.03 (0.00) & \textbf{0.02} (0.00) & 0.26 (0.04) & 0.26 (0.04) & 2.93 (0.18) &  \\
\bottomrule
\end{tabular}
\end{table*}

\clearpage

\subsection{Evaluating Different Coverage Levels}\label{sec:otheralpha}
Lower coverage intervals (i.e. narrower intervals) can lead to greater instability in the estimates. Thus, it is valuable to evaluate our method in the low coverage probability regime to ensure this increased instability doesn't have a disproportionally large effect on our method. We empirically analyze the relative performance of our method for 70\%, 50\%, and 30\% coverage levels in \Cref{tab:low_coverage_70}, \Cref{tab:low_coverage_50} and \Cref{tab:low_coverage_30}. Overall these results are consistent with the 90\% case establishing that RQR-W and QR also obtain narrower intervals when we consider alternative coverage levels.

\begin{table}[h]
\centering
\caption{\textbf{Comparison on 70\% targeted coverage.} For the 9 datasets, we display interval width and the coverage level achieved in parentheses ± a standard error for both. All results report the test set mean over 10 runs.}
\label{tab:low_coverage_70}
\begin{tblr}{
  width = \linewidth,
  colspec = {Q[125]Q[160]Q[160]Q[160]Q[160]Q[173]},
  cells = {c},
  cell{2}{1} = {r=2}{},
  cell{4}{1} = {r=2}{},
  cell{6}{1} = {r=2}{},
  cell{8}{1} = {r=2}{},
  cell{10}{1} = {r=2}{},
  cell{12}{1} = {r=2}{},
  cell{14}{1} = {r=2}{},
  cell{16}{1} = {r=2}{},
  cell{18}{1} = {r=2}{},
  hline{1,20} = {-}{0.08em},
  hline{2,4,6,8,10,12,14,16,18} = {-}{},
} 
Dataset & Ours & QR & SQR-C & SQR-N & IR\\
kin8nm & 69.27 (0.30) & 70.32 (0.49) & 69.34 (0.38) & 62.87 (1.23) & 70.71 (0.35)\\
 & 0.29 (0.02) & 0.28 (0.02) & 0.81 (0.00) & 0.73 (0.02) & 1.09 (0.10)\\
protein & 69.02 (0.13) & 69.85 (0.33) & 70.41 (0.51) & 60.93 (1.77) & 69.46 (0.30)\\
 & 0.93 (0.01) & 1.03 (0.01) & 1.77 (0.02) & 1.36 (0.03) & 1.09 (0.02)\\
yacht & 69.52 (0.61) & 71.94 (2.42) & 72.58 (2.32) & 60.16 (5.07) & 72.58 (0.72)\\
 & 0.18 (0.02) & 0.17 (0.01) & 1.68 (0.21) & 1.00 (0.06) & 5.84 (4.92)\\
wine & 70.47 (0.59) & 70.47 (1.65) & 78.69 (1.30) & 73.66 (3.10) & 69.59 (0.26)\\
 & 0.22 (0.02) & 0.18 (0.00) & 0.30 (0.02) & 0.25 (0.03) & 1.17 (0.06)\\
power & 68.88 (0.14) & 70.02 (0.85) & 69.48 (1.10) & 58.30 (1.22) & 48.90 (10.68)\\
 & 0.03 (0.00) & 0.02 (0.00) & 0.08 (0.00) & 0.07 (0.00) & 0.89 (0.17)\\
boston & 70.39 (0.48) & 63.33 (1.96) & 67.75 (2.02) & 64.80 (2.67) & 70.39 (0.96)\\
 & 0.30 (0.05) & 0.20 (0.01) & 0.70 (0.02) & 0.60 (0.03) & 0.92 (0.16)\\
energy & 69.74 (0.41) & 68.83 (1.45) & 70.26 (2.11) & 59.35 (1.71) & 70.97 (0.72)\\
 & 0.15 (0.02) & 0.11 (0.01) & 0.98 (0.01) & 0.85 (0.03) & 0.89 (0.13)\\
concrete & 66.55 (0.73) & 65.15 (0.74) & 68.45 (1.41) & 63.20 (1.88) & 69.81 (0.52)\\
 & 0.27 (0.01) & 0.23 (0.00) & 0.97 (0.02) & 0.89 (0.02) & 1.03 (0.11)\\
naval & 69.31 (0.35) & 69.29 (0.49) & 70.32 (0.51) & 58.09 (2.73) & 69.49 (0.22)\\
 & 0.02 (0.00) & 0.01 (0.00) & 0.14 (0.03) & 0.13 (0.03) & 1.22 (0.13)
\end{tblr}
\end{table}

\begin{table}[h]
\centering
\caption{\textbf{Comparison on 50\% targeted coverage.} For the 9 datasets, we display interval width and the coverage level achieved in parentheses ± a standard error for both. All results report the test set mean over 10 runs.}
\label{tab:low_coverage_50}
\label{tab:ablation_emrqa}
\begin{tblr}{
  width = \linewidth,
  colspec = {Q[127]Q[162]Q[162]Q[162]Q[162]Q[162]},
  cells = {c},
  cell{2}{1} = {r=2}{},
  cell{4}{1} = {r=2}{},
  cell{6}{1} = {r=2}{},
  cell{8}{1} = {r=2}{},
  cell{10}{1} = {r=2}{},
  cell{12}{1} = {r=2}{},
  cell{14}{1} = {r=2}{},
  cell{16}{1} = {r=2}{},
  cell{18}{1} = {r=2}{},
  hline{1,20} = {-}{0.08em},
  hline{2,4,6,8,10,12,14,16,18} = {-}{},
}
Dataset & Ours & QR & SQR-C & SQR-N & IR\\
kin8nm & 49.55 (0.30) & 49.34 (0.70) & 49.84 (0.64) & 39.23 (3.63) & 51.26 (0.32)\\
 & 0.18 (0.01) & 0.16 (0.01) & 0.53 (0.01) & 0.43 (0.04) & 1.10 (0.35)\\
protein & 49.90 (0.24) & 49.80 (0.40) & 49.89 (0.40) & 43.81 (2.70) & 49.45 (0.28)\\
 & 0.63 (0.01) & 0.64 (0.01) & 1.35 (0.01) & 0.57 (0.06) & 0.56 (0.01)\\
yacht & 49.35 (0.73) & 54.19 (2.75) & 52.74 (3.66) & 50.32 (3.98) & 50.48 (1.42)\\
 & 0.09 (0.01) & 0.10 (0.01) & 1.23 (0.04) & 0.47 (0.08) & 0.36 (0.08)\\
wine & 49.53 (0.24) & 49.16 (0.97) & 56.16 (2.97) & 39.22 (3.03) & 48.53 (0.73)\\
 & 0.15 (0.01) & 0.09 (0.00) & 0.23 (0.02) & 0.15 (0.03) & 0.87 (0.08)\\
power & 49.68 (0.16) & 48.91 (0.52) & 49.75 (0.93) & 42.40 (1.78) & 34.97 (7.64)\\
 & 0.02 (0.00) & 0.01 (0.00) & 0.06 (0.00) & 0.04 (0.00) & 0.82 (0.26)\\
boston & 48.73 (0.94) & 40.69 (1.42) & 47.55 (1.89) & 31.86 (3.28) & 49.51 (0.44)\\
 & 0.23 (0.06) & 0.11 (0.00) & 0.36 (0.02) & 0.28 (0.03) & 0.89 (0.15)\\
energy & 49.81 (0.36) & 51.36 (1.76) & 53.05 (1.32) & 42.86 (2.59) & 50.00 (1.51)\\
 & 0.08 (0.01) & 0.06 (0.00) & 0.78 (0.01) & 0.46 (0.05) & 0.68 (0.09)\\
concrete & 48.59 (1.14) & 42.52 (0.84) & 51.75 (1.50) & 39.85 (3.29) & 50.15 (0.75)\\
 & 0.23 (0.05) & 0.13 (0.00) & 0.66 (0.01) & 0.55 (0.03) & 0.89 (0.09)\\
naval & 49.45 (0.14) & 49.69 (0.66) & 49.47 (1.50) & 37.00 (1.89) & 49.95 (0.30)\\
 & 0.01 (0.00) & 0.01 (0.00) & 0.06 (0.02) & 0.06 (0.02) & 1.67 (0.56)
\end{tblr}
\end{table}

\begin{table}[h]
\centering
\caption{\textbf{Comparison on 30\% targeted coverage.} For the 9 datasets, we display interval width and the coverage level achieved in parentheses ± a standard error for both. All results report the test set mean over 10 runs.}
\label{tab:low_coverage_30}
\begin{tblr}{
  width = \linewidth,
  colspec = {Q[127]Q[162]Q[162]Q[162]Q[162]Q[162]},
  cells = {c},
  cell{2}{1} = {r=2}{},
  cell{4}{1} = {r=2}{},
  cell{6}{1} = {r=2}{},
  cell{8}{1} = {r=2}{},
  cell{10}{1} = {r=2}{},
  cell{12}{1} = {r=2}{},
  cell{14}{1} = {r=2}{},
  cell{16}{1} = {r=2}{},
  cell{18}{1} = {r=2}{},
  hline{1,20} = {-}{0.08em},
  hline{2,4,6,8,10,12,14,16,18} = {-}{},
}
Dataset & Ours & QR & SQR-C & SQR-N & IR\\
kin8nm & 29.85 (0.18) & 29.21 (0.58) & 30.58 (0.53) & 20.08 (2.64) & 30.34 (0.35)\\
 & 0.12 (0.02) & 0.08 (0.00) & 0.31 (0.01) & 0.24 (0.03) & 5.31 (4.95)\\
protein & 29.82 (0.13) & 29.79 (0.14) & 29.56 (0.22) & 18.37 (1.94) & 29.77 (0.17)\\
 & 0.40 (0.02) & 0.31 (0.01) & 0.88 (0.01) & 0.13 (0.02) & 0.31 (0.03)\\
yacht & 30.48 (0.38) & 26.94 (1.72) & 31.13 (3.21) & 18.71 (3.94) & 30.97 (1.31)\\
 & 0.05 (0.01) & 0.04 (0.00) & 0.60 (0.04) & 0.16 (0.05) & 0.19 (0.02)\\
wine & 29.81 (0.29) & 29.56 (1.04) & 30.41 (2.20) & 26.09 (2.90) & 30.13 (0.41)\\
 & 0.08 (0.01) & 0.02 (0.00) & 0.16 (0.01) & 0.10 (0.02) & 0.55 (0.07)\\
power & 29.71 (0.16) & 28.80 (0.40) & 29.76 (0.81) & 18.79 (2.21) & 20.84 (4.55)\\
 & 0.01 (0.00) & 0.01 (0.00) & 0.06 (0.01) & 0.04 (0.01) & 0.77 (0.41)\\
boston & 28.82 (0.66) & 20.78 (1.52) & 28.53 (2.70) & 15.49 (2.39) & 30.49 (0.75)\\
 & 0.07 (0.00) & 0.05 (0.00) & 0.18 (0.02) & 0.14 (0.02) & 0.48 (0.09)\\
energy & 30.19 (0.22) & 31.17 (1.80) & 31.43 (1.32) & 16.17 (2.59) & 30.52 (1.12)\\
 & 0.04 (0.00) & 0.03 (0.00) & 0.54 (0.01) & 0.14 (0.02) & 4.65 (4.30)\\
concrete & 29.17 (0.57) & 23.25 (1.17) & 31.75 (1.99) & 19.76 (3.71) & 29.56 (0.43)\\
 & 0.10 (0.01) & 0.07 (0.00) & 0.37 (0.01) & 0.26 (0.04) & 0.58 (0.06)\\
naval & 30.04 (0.19) & 29.41 (0.40) & 30.20 (0.66) & 16.38 (1.52) & 30.06 (0.11)\\
 & 0.05 (0.04) & 0.00 (0.00) & 0.06 (0.02) & 0.05 (0.01) & 0.84 (0.13)
\end{tblr}
\end{table}

% As we might expect, both methods have increased variance in their estimates at this level. The mean standard errors for the 90\% coverage rate are 0.5 and 0.8 for RQR-W and QR respectively while they are 0.7 (RQR-W) and 1.1 (QR) for the 50\% coverage rate. Using the same definition of achieving coverage as in \Cref{tab:widthresults}, we find that over the 3 different lower coverage rates,  RQR-W outperforms other methods in both coverage rate and interval width. RQR-W reaches the coverage targeted coverage rate in 23/27 cases and outputs the smallest intervals in 14/27 cases. By comparison, the second best performing loss is QR which only reaches the desired coverage in 17/27 cases and outputs the smallest interval in 10/27 cases. 

\clearpage

\subsection{Non-Smoothed kin8nm Intervals}
\begin{figure}[h]
    \centering
    \includegraphics[width=\textwidth]{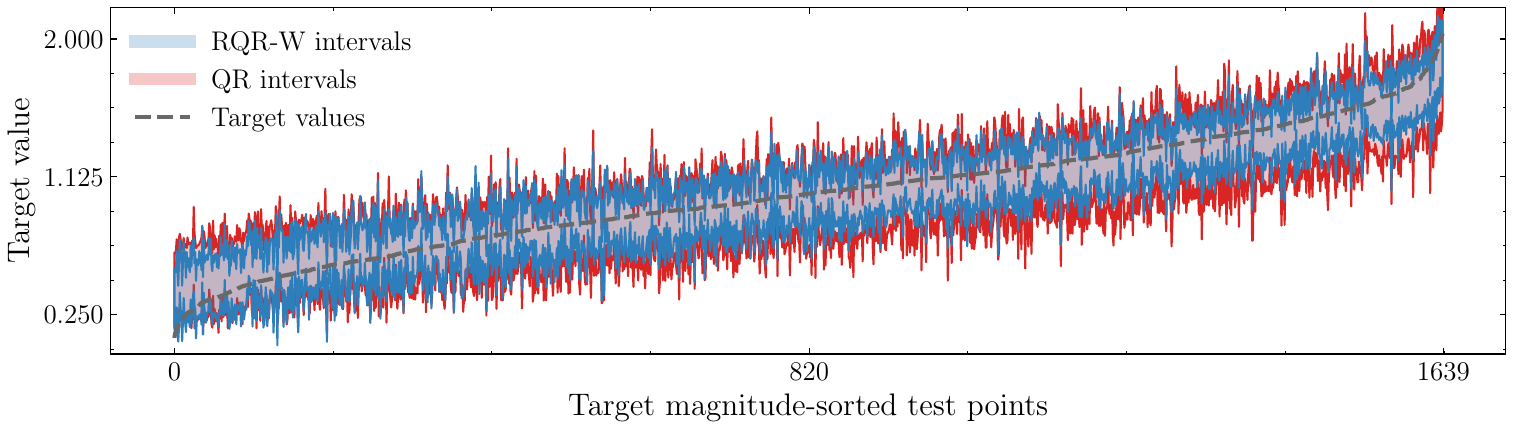}
    \caption{\textbf{Non-smoothed kin8nm intervals.} \Cref{fig:kin8nm} (left) without Savitzky–Golay filter applied. In this version it is apparent that micoverage events do occur as we would expect while the difference in interval width is less legible.}
    \label{fig:unsmoothkin8nm}
\end{figure}

\end{document}